\documentclass{article}

\PassOptionsToPackage{dvipsnames}{xcolor}

\usepackage[preprint]{neurips_2026}

\usepackage[utf8]{inputenc} 
\usepackage[T1]{fontenc}    
\usepackage{hyperref}       
\usepackage{url}            
\usepackage{booktabs}       
\usepackage{amsfonts}       
\usepackage{nicefrac}       
\usepackage{microtype}      
\usepackage{xcolor}         
\usepackage[table]{xcolor}

\usepackage{graphicx}
\usepackage{subcaption}
\usepackage{bm}
\usepackage[most]{tcolorbox}
\usepackage{amsmath}
\usepackage{amssymb}
\usepackage{mathtools}
\usepackage{amsthm}
\usepackage{multirow}
\usepackage{makecell}
\usepackage{pifont}
\usepackage[capitalize,noabbrev]{cleveref}
\usepackage{xspace}
\usepackage{algorithm}
\usepackage{algorithmic}
\usepackage{wrapfig}
\usepackage{etoc}
\usepackage{titletoc}

\theoremstyle{plain}
\newtheorem{theorem}{Theorem}[section]

\newtheorem{lemma}[theorem]{Lemma}
\newtheorem{corollary}[theorem]{Corollary}
\theoremstyle{definition}

\newtheorem{assumption}[theorem]{Assumption}
\theoremstyle{remark}

\def\red#1{\textcolor{red}{#1}}
\definecolor{checkgreen}{RGB}{76,175,80} 

\usepackage{amsmath,amsfonts,bm,eqnarray}
\usepackage{cancel}
\usepackage{amsthm}
\usepackage{tikz}
\usetikzlibrary{tikzmark}

\newtheorem{innercustomgeneric}{\customgenericname}
\providecommand{\customgenericname}{}
\newcommand{\newcustomtheorem}[2]{%
  \newenvironment{#1}[1]
  {%
   \renewcommand\customgenericname{#2}%
   \renewcommand\theinnercustomgeneric{##1}%
   \innercustomgeneric
  }
  {\endinnercustomgeneric}
}

\newcustomtheorem{customThm}{Theorem}
\newcustomtheorem{customLemma}{Lemma}
\newcustomtheorem{customCor}{Corollary}
\newcustomtheorem{customProposition}{Proposition}

\def\Assumptionref#1{Assumption~\ref{#1}}

\def\Tabref#1{Table~\ref{#1}}

\def\Lmmref#1{Lemma~\ref{#1}}

\def\Figref#1{Fig.~\ref{#1}}

\def\appref#1{Appendix~\ref{#1}}
\def\Secref#1{Sec.~\ref{#1}}

\def\eqref#1{equation~\ref{#1}}
\def\Eqref#1{Eqn.~\ref{#1}}

\def\Algref#1{Algorithm~\ref{#1}}

\def\1{\bm{1}}

\def\rvx{{\mathbf{x}}}

\def\vm{{\bm{m}}}

\def\vx{{\bm{x}}}

\DeclareMathAlphabet{\mathsfit}{\encodingdefault}{\sfdefault}{m}{sl}
\SetMathAlphabet{\mathsfit}{bold}{\encodingdefault}{\sfdefault}{bx}{n}

\renewcommand{\frac}{\tfrac}

\renewcommand{\cite}{\citep}

\title{Few-Step Diffusion Language Models \\ via Trajectory Self-Distillation}

\author{%
  Tunyu Zhang \thanks{
  Equal Contribution. 
  $^1$Rutgers University.
  $^2$Red Hat AI Innovation.
  $^3$MIT-IBM Watson AI Lab.
  $^{\text{\textdagger}}$Correspondence to: 
  Tunyu Zhang <ty.zhang@rutgers.edu>,
  Ligong Han <ligong.han@rutgers.edu>,
  Dimitris N. Metaxas <dnm@cs.rutgers.edu>.
  } \space$^{\text{\textdagger} 1}$ \\
  \And
  Xinxi Zhang $^{*1}$ \\
  \And
  Ligong Han $^{\text{\textdagger}\, 2\,3}$ \\
  \And
  Haizhou Shi $^{1}$ \\
  \And
  Xiaoxiao He $^{1}$ \\
  \And
  Zhuowei Li $^{1}$ \\
  \And
  Hao Wang $^{2\,3}$ \\
  \And
  Kai Xu $^{2\,3}$ \\
  \And
  Akash Srivastava $^{2\,3}$ \\
  \And
  Chengzhi Mao $^{1}$\\
  \AND
  Hao Wang $^{1}$ \\
  \And
  Vladimir Pavlovic $^{1}$ \\
  \And
  Dimitris N. Metaxas $^{\text{\textdagger} 1}$ \\
}
\date{}

\begin{document}

\maketitle

\begin{abstract}
Diffusion large language models (DLLMs) have emerged as powerful generative models with the promise of fast text generation through parallel decoding. However, realizing this potential in practice remains challenging: reducing the number of decoding steps, typically causes a substantial degradation in output quality due to token factorization error. To alleviate this, we propose a self-distillation framework that trains a few-step student to match the \emph{generative trajectory} of a full-step teacher. We theoretically and empirically show that trajectory-level supervision mitigates this factorization error, thereby enabling effective few-step decoding. We further incorporate Direct Discriminative Optimization (DDO), a reverse-KL objective that encourages mode-seeking toward the teacher’s modes, yielding stronger performance on challenging reasoning tasks. Across reasoning and code-generation benchmarks, our method substantially narrows the gap between few-step and full-step decoding. The source code is available at \href{https://github.com/Tyrion58/T3D}{https://github.com/Tyrion58/T3D}.
\end{abstract}





\vspace{-2mm}
\section{Introduction}
Inference-time efficiency is a central challenge in large language modeling, especially for real-time and compute-constrained applications~\cite{zhen2025taming, miao2025towards, alizadeh2024llm}. Diffusion large language models (DLLMs) \cite{labs2025mercury, song2025seed, nie2025large, cheng2025sdar,ye2025dream} offer a promising direction by enabling parallel token generation. However, existing DLLMs rely on long decoding chains consisting of many diffusion steps~\cite{sahoo2024simple, schiffsimple, nie2025large, ye2025dream}, which significantly limits their efficiency gains. When decoding is made more aggressive by reducing the number of steps, these models struggle to accurately generate multiple tokens simultaneously~\cite{cheng2025sdar}.

Recent work \cite{yoo2025redi, chen2025dparallel, xu2024energy, qian2026d3llm, kim2025cdlm, zhang2025variational} has sought to accelerate diffusion large language models (DLLMs) and reduce their inference latency. One line of research focuses on system and decoding improvements, such as better decoding strategies \cite{wu2025fast, chen2026dmax} and adapting KV caching \cite{hu2025accelerating, ma2025dkv, liu2025dllm}. Our work targets an orthogonal bottleneck: the model’s internal prediction structure. In masked diffusion models, few-step decoding is fundamentally limited by the \emph{mean-field} (token-factorized) parameterization~\cite{xu2024energy, yoo2025redi, zhang2025variational}. As the number of decoding steps is reduced, this approximation becomes increasingly inaccurate, leading to a growing factorization error between few-step decoding and full-step decoding. As illustrated in \Figref{fig:main}, the error increases as each step is forced to predict more tokens, causing few-step predictions to deviate further from the full-step model and ultimately degrading generation quality. Previous self-distillation methods~\cite{yoo2025redi, chen2025dparallel} rely primarily on endpoint supervision from the teacher. We argue that this underuses the supervision available in the teacher’s full generative trajectory, which contains much richer information about the model’s prediction structure than the endpoint alone.

Motivated by this, we propose \textbf{Trajectory Self-Distillation}, a principled self-distillation framework for effective few-step decoding in MDLMs. Our core idea is to distill a few-step student by matching the \emph{generative trajectory} of the original full-step teacher, rather than supervising only the endpoint. This exposes the student to richer information about the teacher’s prediction structure and allows it to better approximate full-step decoding under a limited step budget. Building on the analysis of ReDi~\cite{yoo2025redi}, we further show theoretically that trajectory-level supervision reduces factorization error across intermediate reverse transitions. Crucially, our analysis also reveals why prior rectified-flow-style self-distillation does not carry over to MDLMs \cite{labs2025mercury, song2025seed, nie2025large, cheng2025sdar,ye2025dream}, the dominant regime of DLLMs: because the masked prior is deterministic, endpoint-based supervision is uninformative for reducing factorization error. In contrast, our method avoids this failure mode by reducing factorization error over the nontrivial decoding intervals that actually govern few-step generation.

To further improve few-step performance on complex reasoning tasks, we replace the standard forward-KL objective with \emph{Direct Discriminative Optimization} (DDO)~\cite{zheng2025direct}, which encourages the student to focus on the teacher’s high-probability modes. Our intuition is that the mode-covering nature of forward KL can produce over-smoothed predictions and weaker trajectory alignment, while reverse-divergence objectives are inherently mode-seeking and thus yield sharper predictions. In addition, we introduce a \emph{path-consistency} regularizer that places greater emphasis on early decoded tokens, which we find particularly helpful for reasoning.


We term our method Self-\textbf{T}rajectory \textbf{D}istillation via \textbf{DD}O (\textbf{\texttt{T3D}}), a simple self-distillation framework for few-step diffusion language modeling. 
We evaluate \textbf{\texttt{T3D}} on reasoning and code-generation benchmarks using both \emph{SDAR}~\cite{cheng2025sdar}, a block-diffusion language model, and \emph{LLaDA}~\cite{nie2025large}, a full-diffusion language model. 
Across a broad range of decoding budgets and model families, \textbf{\texttt{T3D}} improves over prior few-step DLLM methods, with especially clear gains under aggressive decoding budgets. 
Beyond static few-step decoding, \textbf{\texttt{T3D}} also preserves full-step diffusion performance and remains effective under dynamic decoding. 
Together, these results show that trajectory self-distillation provides a practical route toward efficient few-step diffusion language modeling.

\begin{figure}[t]
    \centering
    \includegraphics[width=0.9\linewidth]{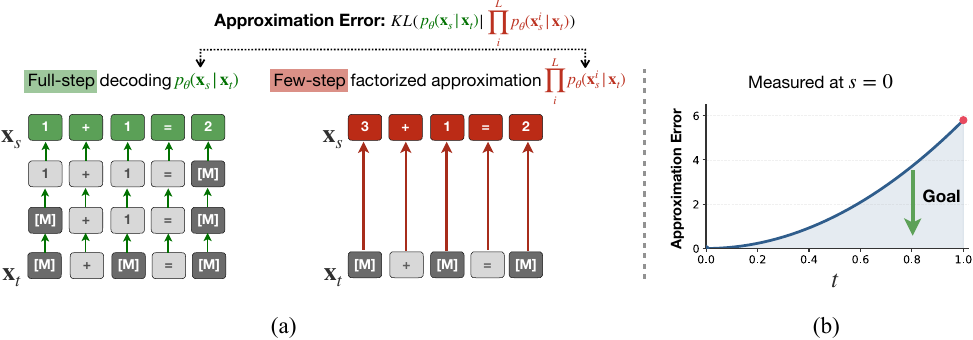}
    \caption{
    \textbf{Factorization error limits few-step decoding in MDLMs.}
    \textbf{(a)} Full-step decoding models the transition $p_\theta(\rvx_s \mid \rvx_t)$ through many denoising steps, while few-step decoding approximate with a token-factorized distribution $\prod_i^L p_\theta(\rvx_s^i \mid \rvx_t)$. This factorized approximation introduces error and can lead to degenerate generations.
    \textbf{(b)} We measured this approximation error on real MATH500 data for $s=0$. The error increases sharply as $t$ grows, showing that larger decoding jumps amplify factorization error. Our goal is to reduce this error and enable reliable few-step decoding.
    }
    \label{fig:main}
\end{figure}

\section{Related Work}

\paragraph{Few-step Diffusion.}
Despite their remarkable success, diffusion models~\cite{yang2023diffusion, ho2020denoising} remain computationally expensive due to their iterative sampling process. 
Consistency Models~\cite{song2023consistency, song2023improved} accelerate generation by enforcing consistency across time, while flow-map-based methods~\citep{geng2025mean, boffi2024flow} reduce sampling cost by directly modeling state-to-state displacements. 
In practice, distillation-based variants often achieve stronger performance, which many attribute to their use of \emph{teacher trajectories}. For example, Consistency Distillation~\cite{song2023consistency} matches teacher intermediate states, CMT~\citep{hu2025cmt} bootstraps training with teacher rollouts, and Re-MeanFlow~\citep{zhang2025flow} leverages teacher-rectified trajectories. Our work aims to bring this trajectory-based perspective to discrete diffusion language modeling.

\paragraph{Efficient Inference for Diffusion Language Models.}
Diffusion large language models (DLLMs) \cite{labs2025mercury, song2025seed, nie2025large, cheng2025sdar,ye2025dream} have recently emerged as powerful generative models for text, but like their continuous counterparts, they require many iterative refinement steps during inference. One line of work improves efficiency through system- and decoding-level advances, such as KV caching~\cite{li2025survey, hu2025accelerating, ma2025dkv}, dynamic decoding~\cite{wu2025fast}, and block-structured diffusion generation~\cite{arriola2025block, cheng2025sdar, wu2025fast2, wang2025diffusion}. Another, orthogonal line of work aims to reduce the number of sampling steps directly. For example, EDLM~\cite{xu2024energy} introduces an energy-based objective to reduce factorization error, dParallel~\cite{chen2025dparallel} distills a few-step model by matching teacher rollouts, and ReDi~\cite{yoo2025redi} adopts a rectified-flow-style~\cite{liu2022flow} distillation procedure. Our work addresses a missing piece in this literature: fully leveraging the supervision available throughout the denoising trajectory.

\section{Background}
\subsection{Masked Diffusion Language Models (MDLMs)}
In this work, we focus on masked diffusion language models (MDLMs)~\citep{sahoo2024simple, shi2024simplified}, as they are the prominent paradigm for current large-scale diffusion language models~\cite{labs2025mercury, song2025seed, nie2025large, cheng2025sdar,ye2025dream}. 

MDLMs are diffusion-based generative models for discrete text sequences.
Let $p_{\mathrm{data}}$ denote the data distribution.
A data sample $\rvx_0 \sim p_{\mathrm{data}}$ is a length-$L$ token sequence
$\rvx_0 = (\rvx_0^{1}, \ldots, \rvx_0^{L})$, where $\rvx_0^{i} \in \mathcal{V}$ denotes a discrete token from a finite vocabulary $\mathcal{V}$ augmented with a special mask token $\vm$.

The \emph{forward (noising)} diffusion process is defined over continuous time $t \in [0,1]$ and corrupts a sequence by independently masking tokens.
The corruption distribution factorizes across tokens:
\begin{align}
\label{eq:forward}
q(\rvx_t \mid \rvx_0)= \prod_{i=1}^{L} q(\rvx_t^{i} \mid \rvx_0^{i}),
\end{align}
where the token-wise kernel $q(\rvx_t^{i} \mid \rvx_0^{i})$ is governed by a
monotonically decreasing noise schedule $\alpha_t \in [0,1]$: at time $t$,
$\rvx_t^{i}$ is preserved as $\rvx_0^{i}$ with probability $\alpha_t$ and replaced by the mask token $\vm$ with probability $1-\alpha_t$. We choose $\alpha_t = 1 - t$ following previous works ~\cite{nie2025large, sahoo2024simple}.

Given a noisier sequence $\rvx_t$, the \emph{reverse (denoising)} process learns to recover a cleaner sequence $\rvx_s$ at an earlier time $s<t$. This reverse transition is approximated by a neural network $p_\theta$ that also factorizes over tokens:
\begin{align}
\label{eq:factorization-error}
    p_\theta(\rvx_s \mid \rvx_t)
    \approx \prod_{i=1}^{L}
    p_\theta\!\left(\rvx_s^{(i)} \mid \rvx_t\right).
\end{align}

As shown in \citep{sahoo2024simple, shi2024simplified}, maximizing the evidence lower bound (ELBO)
for MDLMs admits a remarkably simple form.
Concretely, the optimization objective reduces to a masked-token cross-entropy objective:
\begin{align}
\label{eq:mdm-loss}
\mathcal{L}(\theta)
&=
- 
\mathbb{E}_{\rvx_t \sim q(\cdot \mid \rvx_0)}
\bigl[
\log p_\theta(\rvx_0 \mid \rvx_t)
\bigr]\,
.
\end{align}

\subsection{Direct Discriminative Optimization (DDO)}
Direct Discriminative Optimization (DDO) \citep{zheng2025direct} is a GAN-inspired objective for likelihood-based generative models. Unlike standard GANs \cite{goodfellow2020generative}, which introduce an additional discriminator network, DDO implicitly parameterizes the discriminator using likelihood ratios. Consider a pretrained model $p_{\theta_{\mathrm{ref}}}$ that supplies ``fake'' samples. To distinguish real data $\rvx\sim p_{\mathrm{data}}$ from reference samples $\rvx\sim p_{\theta_{\mathrm{ref}}}$, the optimal discriminator is:
\[
d^*(\rvx)=\frac{p_{\mathrm{data}}(\rvx)}{p_{\mathrm{data}}(\rvx)+p_{\theta_{\mathrm{ref}}}(\rvx)}
      =\sigma\!\left(\log\frac{p_{\mathrm{data}}(\rvx)}{p_{\theta_{\mathrm{ref}}}(\rvx)}\right),
\]
where $\sigma(\cdot)$ denotes the sigmoid function. DDO replaces the unknown $p_{\mathrm{data}}$ by parameterizing a discriminator through a learnable likelihood-based model $p_\theta$:
\[
d_\theta(\rvx) := \sigma\!\left(\log\frac{p_\theta(\rvx)}{p_{\theta_{\mathrm{ref}}}(\rvx)}\right).
\]
Substituting this implicit discriminator into the GAN discriminator loss yields the DDO objective:
\begin{align}
\min_{\theta}\,\mathcal{L}(\theta)
&= -\mathbb{E}_{x\sim p_{\mathrm{data}}}\!\left[\log \sigma\!\left(\log \frac{p_{\theta}(\rvx)}{p_{\theta_{\mathrm{ref}}}(\rvx)}\right)\right] -\mathbb{E}_{x\sim p_{\theta_{\mathrm{ref}}}}\!\left[\log \left(1-\sigma\!\left(\log \frac{p_{\theta}(\rvx)}{p_{\theta_{\mathrm{ref}}}(\rvx)}\right)\right)\right].
\label{eq: ddo}
\end{align}
With unlimited model capacity, \cite{zheng2025direct} show that the global minimizer of the DDO objective above satisfies $p_\theta^* = p_{\mathrm{data}}$. 

\begin{figure}[t]
    \centering
    \includegraphics[width=0.9\linewidth]{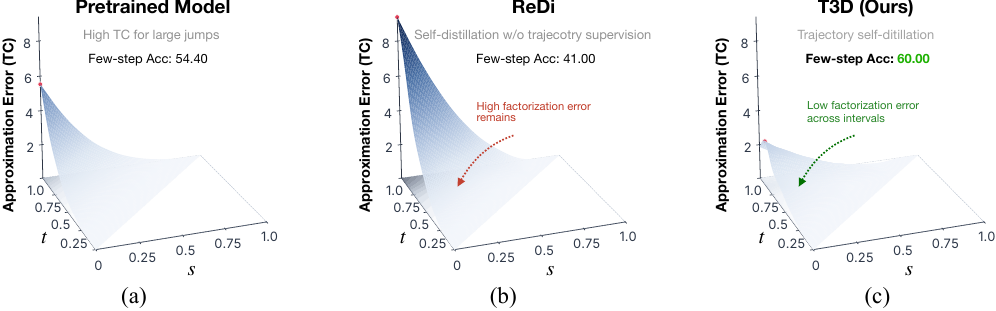}
    \caption{
    \textbf{Approximation Error Surface Across Decoding Interval.}
    We quantify factorization error using conditional total correlation (TC) on MATH500 with SDAR-4B-Chat, visualized over decoding intervals $(s,t)$; larger intervals correspond to more aggressive few-step jumps.
    \textbf{(a)} The pretrained model exhibits high TC for large jumps, leading to degraded few-step accuracy.
    \textbf{(b)} ReDi-style self-distillation without trajectory supervision leaves high TC unresolved, thereby hurting few-step performance.
    \textbf{(c) T3D directly matches the teacher trajectory, substantially lowering TC across intervals and achieving strong few-step accuracy.}
    }
    \label{fig:tc}
\end{figure}
\section{Methods}
\subsection{Factorization Error: The Key Bottleneck in Few-Step MDLMs}

As shown in \Figref{fig:main}, few-step decoding in MDLMs relies on the mean-field parameterization in Eq.~\ref{eq:factorization-error}, which factorizes the reverse transition across tokens. While necessary for tractability, this factorization introduces an approximation error that becomes more severe as the sampling budget is reduced, i.e., as the gap between $s$ and $t$ grows.

Following prior work~\cite{yoo2025redi}, we quantify this error using Conditional Total Correlation (TC), defined as the expected KL divergence between the reverse transition and its token-factorized approximation:
\begin{align}
    TC_J(\rvx_s \mid \rvx_t)
    :=
    \mathbb{E}_{\rvx_t}
    \!\left[
    \mathrm{KL}
    \Big(
     p(\rvx_s \mid \rvx_t)
    \,\big\|\,
    \prod_{i=1}^{L} p(\rvx_s^i \mid \rvx_t)
    \Big)
    \right].
    \label{eq: tc}
\end{align}
Here, the Conditional TC is defined with respect to the joint distribution
$
J(\rvx_s, \rvx_t) = p(\rvx_t)\, p(\rvx_s \mid \rvx_t)
$, induced by the marginal at time $t$ and the reverse posterior.

As shown in Fig.~\ref{fig:main} (b) and \Figref{fig:tc} (a), Conditional TC rises as the decoding interval becomes larger. This identifies factorization error as the central bottleneck in few-step MDLMs: when fewer steps are used, each step must model stronger cross-token dependencies, but the tokenwise factorization becomes increasingly inaccurate, leading to degraded generation quality. 

\subsection{Trajectory Self-Distillation}
\label{sec:traj-distill}
To overcome the factorization bottleneck in few-step MDLMs, we propose \emph{trajectory self-distillation}, which trains a few-step student directly on teacher rollout trajectories.

Specifically, given a pretrained teacher model $p_\phi$, we want to train a few-step student model $p_\theta$ initialized from $p_\phi$ by leveraging pairs of clean and intermediate states $(\rvx_0, \rvx_t)$ sampled along the teacher’s generative trajectory $p_\phi^{\mathrm{Tra}}(\rvx_{0:T})$:
\begin{align}
    p_\phi^{\mathrm{Tra}}(\rvx_{0:T}) = p(\rvx_T)\prod_{t=1}^T p_\phi(\rvx_{t-1} \mid \rvx_t)
\end{align}
Then we define a forward-KL objective that trains the few-step student to match the teacher trajectory, leading to the following self-trajectory distillation loss:
\begin{align}
\label{eq:traj-distill}
\mathcal{L}_{\mathrm{traj}}(\theta)
=
-\,&\mathbb{E}_{p_\phi^{\mathrm{Tra}}(\rvx_t)}
\mathbb{E}_{\rvx_0 \sim p_\phi^{\mathrm{Tra}}(\rvx_0 \mid \rvx_t)} \!\left[\,
\log p_\theta(\rvx_0 \mid \rvx_t)
\right].
\end{align}

Intuitively, this formulation provides substantially richer supervision than endpoint-only distillation ~\cite{chen2025dparallel, yoo2025redi}, since it exposes the student to the teacher’s prediction structure throughout the reverse process rather than only at the final target. More importantly, we show that trajectory self-distillation directly targets the source of few-step failure in MDLMs by reducing approximation error, building on the Conditional TC analysis of ReDi~\cite{yoo2025redi}:
\begin{theorem}[\textbf{Trajectory Distillation Induces Lower Conditional Total Correlation}]
\label{thm:traj-tc}
Let $p_\phi$ be a pretrained teacher model and $p_\theta$ a student model. Define the teacher trajectory joint distribution as
$
J_\phi(\rvx_s,\rvx_t)
$
and the student-induced joint distribution as
$
J_\theta(\rvx_s,\rvx_t).
$
Let $\theta^*$ be the optimal solution to \Eqref{eq:traj-distill}, and let $J_{\theta^*}$ denote the corresponding student joint distribution. Then, for any $s<t$, under mild assumptions, the following inequality holds:
\begin{align}
    \mathbb{E}_t\!\left[TC_{J_{\theta^*}}(\rvx_s \mid \rvx_t)\right]
    \le
    \mathbb{E}_t\!\left[TC_{J_\phi}(\rvx_s \mid \rvx_t)\right].
\end{align}
For proof, please see~\appref{app:proof}.
\end{theorem}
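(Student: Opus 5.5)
Following ReDi~\cite{yoo2025redi}, I would reduce the statement to an information-theoretic identity that links conditional TC to quantities the trajectory loss controls. First I fix the objects. The teacher joint is $J_\phi(\rvx_s,\rvx_t)=p^{\mathrm{Tra}}_\phi(\rvx_t)\,p^{\mathrm{Tra}}_\phi(\rvx_s\mid\rvx_t)$. The student joint $J_\theta$ uses the same marginal on $\rvx_t$ (student and teacher are evaluated on teacher-rollout states). Its transition is the one the student actually produces: the factorized prediction $\prod_i p_\theta(\rvx_s^i\mid\rvx_t)$, applied through the masking kernel that keeps unmasked tokens and reveals each masked token with probability $(t-s)/t$.

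The ``mild assumptions'' I would state explicitly are two. \emph{(i) Realizability within the factorized family:} the minimizer of \Eqref{eq:traj-distill} satisfies $p_{\theta^*}(\rvx_0^i\mid\rvx_t)=p^{\mathrm{Tra}}_\phi(\rvx_0^i\mid\rvx_t)$ for every $i$ and every $\rvx_t$. This holds because minimizing a forward KL from a joint target to a product distribution is achieved exactly by the product of the target's marginals. \emph{(ii) Marginal preservation:} the teacher's per-token single-step marginals coincide with its trajectory marginals.

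The key steps, in order, are as follows.

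\textbf{Step 1 (trajectory distillation yields a product of marginals).} I would show that, given $\rvx_t$, the optimal student's joint over $\rvx_s$ is the product of the teacher trajectory's per-token marginals. Under the forward masking kernel, the tokens of $\rvx_s$ depend on one another only through the tokens of $\rvx_0$, and the reveal coins are independent across positions. Hence if the $\rvx_0$-prediction is fully factorized, so is $J_{\theta^*}(\rvx_s\mid\rvx_t)$. This gives $TC_{J_{\theta^*}}(\rvx_s\mid\rvx_t)=0$ for the student's own transition, or more generally the TC contribution due to interactions through $\rvx_0$ vanishes.

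\textbf{Step 2 (nonnegativity of the teacher's TC).} Since TC is an expected KL divergence, $TC_{J_\phi}\ge 0$ pointwise in $t$.

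\textbf{Step 3 (conclusion).} Combining Steps 1 and 2 gives the inequality pointwise for each $(s,t)$. Taking $\mathbb{E}_t$ then yields the claim.

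If one instead wants a non-trivial comparison in which the student's TC is measured against its own induced joint rather than being zero by construction, I would use the decomposition
\[
\mathrm{KL}\Big(J_\phi(\cdot\mid\rvx_t)\,\big\|\,\textstyle\prod_i q^i\Big)=TC_{J_\phi}(\rvx_s\mid\rvx_t)+\sum_i \mathrm{KL}\big(J_\phi^i\,\|\,q^i\big).
\]
Here $q^i$ are the student's per-token transition factors and $J_\phi^i$ are the teacher's per-token marginals. The trajectory loss upper-bounds the left side, in the spirit of ReDi. Its optimum kills the marginal-mismatch term and leaves only the teacher's TC. Any residual dependency in the student's realized transition is then bounded by this quantity.

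The main obstacle is making the student joint $J_\theta$ precise. If it is defined as the student's realized factorized transition, the statement becomes nearly tautological. If it is defined via a multi-step student rollout, one must argue that the rollout inherits the teacher's pairwise structure. I would try first the former definition with the chain-rule decomposition above. For the rollout version, I would add an assumption that student rollouts of the optimal $\theta^*$ reproduce the teacher marginal $p^{\mathrm{Tra}}_\phi(\rvx_t)$, and then apply the data-processing inequality to the composition of kernels.
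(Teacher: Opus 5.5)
\paragraph{There is a genuine gap: your main argument is vacuous.}
Your Steps 1--3 prove the inequality only in a degenerate sense. If $J_{\theta^*}(\rvx_s\mid\rvx_t)$ is the student's single factorized transition, then $TC_{J_{\theta^*}}=0$ for \emph{every} $\theta$, trained or not. The optimality of $\theta^*$ never enters, and the conclusion $0\le TC_{J_\phi}$ says nothing about distillation; your own remark that the statement becomes ``nearly tautological'' is the symptom. The paper's proof concerns a different object. There $p_{\theta^*}(\rvx_s\mid\rvx_t)$ is a general, non-product transition lying in the log-convex hull of the $T$-step decoding processes (Assumption~\ref{assump:convex}). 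So $TC_{J_{\theta^*}}=\mathbb{E}_{\rvx_t}\mathrm{KL}\big(p_{\theta^*}(\rvx_s\mid\rvx_t)\,\|\,\prod_i p_{\theta^*}(\rvx_s^i\mid\rvx_t)\big)$ is genuinely nonzero and must actually be compared with the teacher's.

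\paragraph{Your fallback points the wrong way, and the projection step is missing.}
The identity $\mathrm{KL}(J_\phi\|\prod_i q^i)=TC_{J_\phi}+\sum_i\mathrm{KL}(J^i_\phi\|q^i)$ bounds the \emph{teacher's} TC by the loss. Nothing supports the sentence ``any residual dependency in the student's realized transition is then bounded by this quantity.''

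The missing idea is a projection argument with the student in the first slot. The paper assumes (Assumptions~\ref{assump:ideal-optimality} and~\ref{assump:convex}) that $p_{\theta^*}(\cdot\mid\rvx_t)$ minimizes $\mathrm{KL}(p_\phi(\cdot\mid\rvx_t)\,\|\,q)$ over a log-convex family that also contains $\prod_i p_\phi(\rvx_s^i\mid\rvx_t)$. It then applies the Pythagorean inequality (Lemma~\ref{lemma:pytha-kl}), followed by your own identity applied to $p_{\theta^*}$ instead of $p_\phi$. Everything below is conditional on $\rvx_t$, and $p^i$ denotes per-token marginals:
\begin{align*}
\mathrm{KL}\big(p_\phi\,\|\,\prod\nolimits_i p^i_\phi\big)
&\ge \mathrm{KL}(p_\phi\,\|\,p_{\theta^*})+\mathrm{KL}\big(p_{\theta^*}\,\|\,\prod\nolimits_i p^i_\phi\big)\\
&\ge \mathrm{KL}\big(p_{\theta^*}\,\|\,\prod\nolimits_i p^i_{\theta^*}\big)+\sum\nolimits_i \mathrm{KL}\big(p^i_{\theta^*}\,\|\,p^i_\phi\big)
\ge \mathrm{KL}\big(p_{\theta^*}\,\|\,\prod\nolimits_i p^i_{\theta^*}\big).
\end{align*}
Averaging over $\rvx_t$ drawn from the teacher marginal gives the theorem.

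\paragraph{The data-processing plan does not fill the gap.}
Your data-processing route for the rollout version cannot replace this step. The denoising kernels are not coordinatewise, since each token's update depends on all of $\rvx_t$. Composing with them can therefore create total correlation rather than shrink it. A rollout from the deterministic all-mask state, whose TC is zero, is exactly such a case.
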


We further show that trajectory-level supervision is particularly important for few-step distillation in MDLMs, where endpoint-only, rectified-flow-style~\cite{liu2022flow} distillation methods such as ReDi~\cite{yoo2025redi} do not directly apply, because they provide no meaningful reduction in factorization error.

\begin{corollary}[\textbf{Endpoint-only Distillation Does Not Reduce Conditional Total Correlation for MDLMs}]
\label{cor:endpoint-vacuous}
In MDLMs \cite{sahoo2024simple, shi2024simplified}, the prior $p(\rvx_T) = \delta_{\mathbf{m}}$ is deterministic. Therefore, for any model,
$
    p(\rvx_0 \mid \rvx_T) = q(\rvx_0),
$
and consequently,
\begin{align}
    TC_J(\rvx_0 \mid \rvx_T)
    =
    \mathrm{KL}\Big(
    q(\rvx_0)
    \,\big\|\,
    \prod_{i=1}^{L} q(\rvx_0^i)
    \Big).
\end{align}
This quantity is a fixed constant of the data distribution and thus cannot be reduced. For proof, please see~\appref{app:proof}.
\end{corollary}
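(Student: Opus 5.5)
The argument reduces to a single observation: when the prior is a point mass, conditioning on $\rvx_T$ carries no information, so the conditional total correlation becomes an unconditional one fixed by the endpoint marginal. I would proceed in three steps.

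\textbf{Step 1: collapse the conditioning.} In an MDLM the prior is $p(\rvx_T)=\delta_{\mathbf{m}}$, so the only state with positive probability is the all-mask sequence $\mathbf{m}=(\vm,\ldots,\vm)$. For any joint $J(\rvx_0,\rvx_T)=p(\rvx_T)\,p(\rvx_0\mid\rvx_T)$ (teacher, student, or data-induced), the $\rvx_0$-marginal is
\[
J(\rvx_0)=\sum_{\rvx_T}\delta_{\mathbf{m}}(\rvx_T)\,p(\rvx_0\mid\rvx_T)=p(\rvx_0\mid\mathbf{m}).
\]
Writing $q(\rvx_0)$ for this endpoint marginal gives $p(\rvx_0\mid\rvx_T)=q(\rvx_0)$ almost surely under $J$, which is the first claim. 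Values of $p(\cdot\mid\rvx_T)$ at other $\rvx_T$ are irrelevant because they carry zero prior weight.

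\textbf{Step 2: identify the per-token factors.} Marginalizing $q(\rvx_0)$ over all coordinates except $i$ gives $p(\rvx_0^i\mid\rvx_T=\mathbf{m})=q(\rvx_0^i)$. So the product in the definition of $TC_J$ in \Eqref{eq: tc} is $\prod_{i=1}^L q(\rvx_0^i)$.

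\textbf{Step 3: evaluate the expectation.} The outer expectation over $\rvx_T\sim\delta_{\mathbf{m}}$ reduces to evaluation at $\mathbf{m}$, so
\[
TC_J(\rvx_0\mid\rvx_T)=\mathrm{KL}\Big(q(\rvx_0)\,\big\|\,\prod_{i=1}^{L}q(\rvx_0^i)\Big).
\]
This is the unconditional total correlation of the endpoint marginal.

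\textbf{Interpretation.} The remaining claim is that this quantity "cannot be reduced," and this is where the subtlety lies. The quantity depends on the model only through its endpoint marginal. In endpoint (rectified-flow-style) distillation, that marginal is pinned to the target distribution: the data, or equivalently the teacher's output distribution being matched. So at the optimum it equals a fixed constant of that distribution, independent of the coupling the distillation constructs.

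This contrasts with continuous rectified flow. There, a nondegenerate noise prior lets re-coupling reduce conditional dependence; here there is nothing to re-couple with. I would state this explicitly: any reduction in factorization error must come from intervals $(s,t)$ with $t<T$, where $\rvx_t$ is nontrivial. That is the regime addressed by Theorem~\ref{thm:traj-tc}.
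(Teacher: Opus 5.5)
Your proof is correct and is essentially the paper's argument: the point-mass prior forces $J(\rvx_0,\rvx_T)=q(\rvx_0)\,\delta_{\mathbf{m}}(\rvx_T)$, so the outer expectation collapses and the conditional TC becomes the unconditional total correlation of the endpoint marginal. Your reading of $q$ as the endpoint marginal of $J$ is more careful than the paper's ``regardless of the model'' phrasing, since it makes explicit that ``cannot be reduced'' holds only once that marginal is pinned to the target distribution.
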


As shown in Fig.\ref{fig:tc} (b), endpoint-only distillation fails to reduce Conditional TC. By contrast, Theorem~\ref{thm:traj-tc} shows that trajectory self-distillation avoids this degeneracy by operating on intermediate states $\rvx_t$. Empirically, this yields a substantially lower TC surface across decoding intervals (\Figref{fig:tc} (c)), which translates into much stronger few-step decoding quality.


\begin{figure}[t]
    \includegraphics[width=\linewidth]{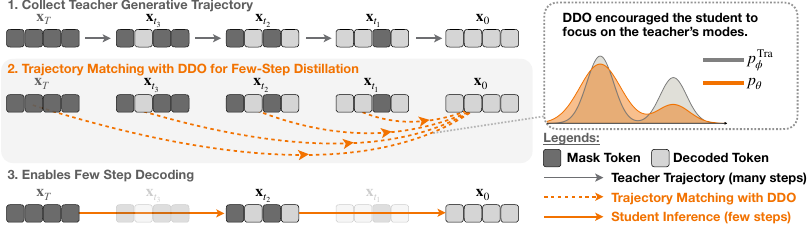}
        \caption{
        \textbf{Overview of T3D.}
        T3D first collects the teacher's full generative trajectory and distills it into a few-step student by matching the student outputs to this teacher trajectory. We use DDO as the trajectory-matching objective, which encourages the student to focus on the teacher's high-probability modes, thereby producing sharper and higher-quality predictions. After distillation, the student can skip intermediate states and perform efficient few-step decoding.
        }
        \label{fig:method}
\end{figure}
\subsection{Improving Trajectory Alignment with DDO}
\label{sec:traj-ddo}
The forward-KL objective in Eq.~\ref{eq:traj-distill} provides a natural way to align the student with teacher trajectories and mitigate the trajectory-level mismatch underlying factorization error.
However, as a mode-covering objective, it can still produce over-smoothed predictions and suboptimal alignment with teacher-generated trajectories. We argue this is harmful for complex reasoning tasks, where sharp decisions on high-probability continuations are often critical. 

Motivated by this, we adopt \emph{Direct Discriminative Optimization} (DDO)~\cite{zheng2025direct} to further improve the few-step quality for trajectory self-distillation.  This GAN-inspired objective induces reverse-KL-like mode-seeking behavior without introducing an additional discriminator. It can be integrated into trajectory self-distillation with minimal modification, while encouraging the student to focus on the teacher's high-probability trajectories.

Formally, we define the trajectory-level DDO objective as:
\begin{align}
\label{eq:total-ddo}
    \mathcal{L}_{\mathrm{traj\text{-}DDO}}(\theta)
    =
    \mathbb{E}_{\rvx_t \sim p_\phi^{\mathrm{Tra}}(\rvx_t)} \bigl[\, l(\theta) \,\bigr],
\end{align}
where the per-step DDO loss is:
\begin{align}
\label{eq:step-ddo}
l(\theta)
&= -\log \sigma\!\left(
\mathbb{E}_{\rvx_0 \sim p_{\phi}^{\mathrm{Tra}}(\rvx_0 \mid \rvx_t)}
\!\left[
\log \frac{p_{\theta}(\rvx_0 \mid \rvx_t)}
{p_{\theta_{\mathrm{ref}}}(\rvx_0 \mid \rvx_t)}
\right]\right)
-\log \!\left(
1 -
\sigma\!\left(
\mathbb{E}_{\rvx_0 \sim p_{\theta_{\mathrm{ref}}}(\rvx_0 \mid \rvx_t)}
\!\left[
\log \frac{p_{\theta}(\rvx_0 \mid \rvx_t)}
{p_{\theta_{\mathrm{ref}}}(\rvx_0 \mid \rvx_t)}
\right]\right)
\right),
\end{align}

where $p_{\theta_{\mathrm{ref}}}$ is a reference model that provides ``fake'' samples and is initialized from $p_\theta$.
The first term encourages the student to assign higher likelihood than the reference model to teacher-generated samples, while the second term penalizes overestimation of samples from the reference model.
As illustrated in Fig.~\ref{fig:entropy}, DDO induces a desirable exploration--exploitation pattern along the decoding trajectory: it maintains higher entropy at the fully masked initial stage, allowing broader exploration, and produces substantially lower entropy afterwards, enabling sharper refinement around teacher-preferred modes.
This sharper trajectory alignment translates into improved few-step reasoning.

\begin{figure}[t]
    \centering

    \begin{minipage}[t]{0.48\linewidth}
        \centering
        \includegraphics[width=0.9\linewidth]{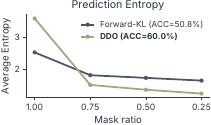}
        \caption{
        \textbf{Average prediction entropy during trajectory distillation.}
        We compare forward-KL and DDO on MATH500 with SDAR-4B-Chat. DDO maintains higher entropy at the fully masked stage but yields substantially lower entropy afterwards, suggesting broad early exploration followed by sharper refinement. This mode-seeking behavior improves trajectory alignment and reasoning accuracy.
        }
        \label{fig:entropy}
    \end{minipage}
    \hfill
    \begin{minipage}[t]{0.50\linewidth}
        \centering
        \includegraphics[width=0.9\linewidth]{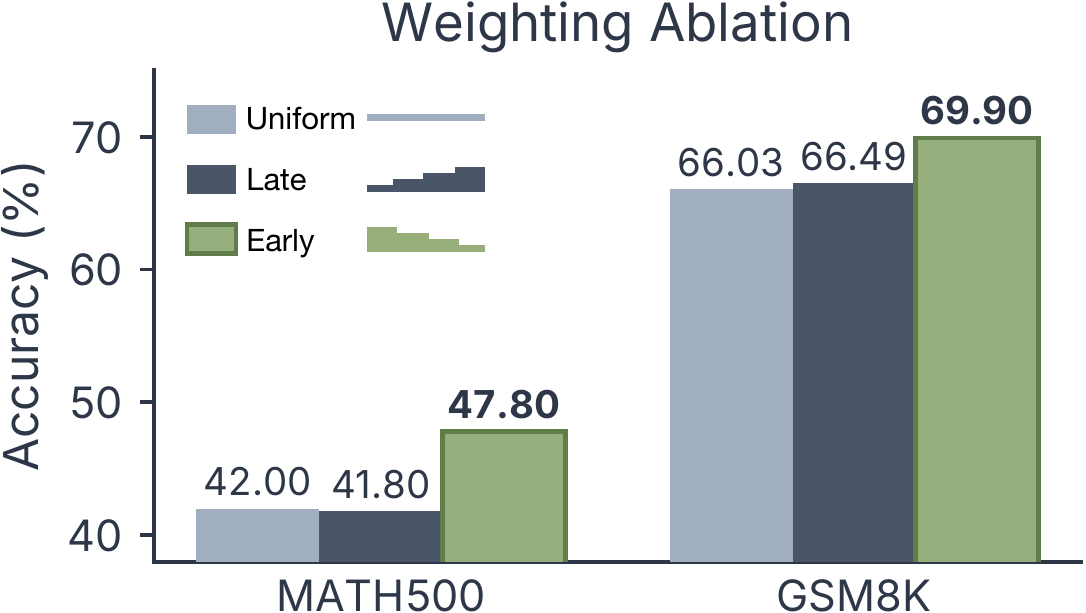}
        \caption{
        \textbf{Comparison of path-consistency weighting strategies for \texttt{T3D}.}
        We compare uniform weighting, late-token weighting, and our early-token weighting on MATH500 and GSM8K. Early-token weighting consistently achieves the best accuracy, suggesting that tokens decoded earlier in the trajectory are more critical under step compression, since their errors can propagate to later decoding steps.
        }
        \label{fig:weight}
    \end{minipage}
    \vspace{-2mm}
\end{figure}

\textbf{Path-Consistency Regularization.}\quad
We further introduce a lightweight \emph{path-consistency} regularization that places larger weight on tokens decoded earlier in the trajectory, since errors at early steps are more likely to propagate under tight decoding budgets. Formally, given a fixed decoding budget $B$, let $\pi_i \in [B]$ denote the decoding step at which token $\rvx_0^i$ is generated, and define the step-dependent weight
$w_i = \frac{B-\pi_i+1}{B}.$  Then we define a token-level weighted path-consistency regularization loss as:
\begin{align}
\label{eq:loss-path}
\mathcal{L}_{\mathrm{path}}(\theta)
= -\,\mathbb{E}_{p_\phi(\rvx_t)}
\mathbb{E}_{\rvx_0 \sim p_\phi(\cdot \mid \rvx_t)}
\!\left[
\sum_i w_i \log p_\theta(\rvx_0^i \mid \rvx_t^{(i)})
\right].
\end{align}
This assigns larger training weight to earlier-decoded tokens while leaving the objective otherwise unchanged. As shown in Fig.~\ref{fig:weight}, we compare against uniform weighting ($w_i=1$) and a late-token schedule ($w_i=\pi_i/B$), which assigns larger weights to later-decoded tokens. Our early-token weighting consistently performs best, indicating that early decoding decisions are more critical under tight step budgets because their errors are more likely to propagate through the remaining trajectory.

\textbf{Final objective.}\quad
Our full method, \textbf{\texttt{T3D}} (\textbf{T}rajectory self-\textbf{D}istillation via \textbf{DD}O), first collects teacher-generated trajectories and then trains a few-step student using DDO together with path-consistency regularization. The final training objective is
\begin{align}
\label{eq:ddo-loss}
    \mathcal{L}_\textbf{\texttt{T3D}}(\theta)
    =
    \mathcal{L}_{\mathrm{traj\text{-}DDO}}(\theta)
    +
    \lambda\, \mathcal{L}_{\mathrm{path}}(\theta),
\end{align}
where $\lambda$ controls the strength of the path-consistency regularization. Figure~\ref{fig:method} provides an overview of the framework, and the full training algorithm is given in \appref{app:algo}.

\vspace{-3mm}
\section{Experiments}
\label{sec:experiments}
\subsection{Experimental Settings}
\label{sec:settings}
\begin{table*}[t]
\caption{
\textbf{Few-step accuracy comparison across baselines on SDAR-1.7B-Chat and SDAR-4B-Chat~\cite{cheng2025sdar}.} 
Few-step performance is evaluated using tokens-per-step (TokPS): for example, \textbf{Block Size}$=4$ and \textbf{TokPS}$=2$ means decoding uses blocks of 4 tokens while generating 2 tokens per diffusion step, resulting in $4/2=2$ diffusion steps per block. \textbf{SD} means Self-Distillation methods. 
\textbf{T3D is consistently among the strongest methods}, demonstrating the effectiveness of trajectory-level distillation for few-step generation.
}
\centering
\small
\resizebox{\linewidth}{!}{%
\setlength{\tabcolsep}{1pt}
\begin{tabular}{clc cccc cccc cc} 
\toprule
\multirow{2}{*}{\textbf{\makecell{TokPS}}} 
& \multirow{2}{*}{\textbf{\makecell{Method}}} 
& \multirow{2}{*}{\textbf{\makecell{SD}}}

& \multicolumn{4}{c}{\textbf{Block Size = 4}} 
& \multicolumn{4}{c}{\textbf{Block Size = 8}}
& \multirow{2}{*}{\textbf{\makecell{AVG.}}}
& \multirow{2}{*}{\textbf{\makecell{Gains (\%)}}}
\\
\cmidrule(lr){4-7} \cmidrule(lr){8-11}
&
&
& MATH500
& GSM8K
& MBPP
& HumanEval

& MATH500
& GSM8K
& MBPP
& HumanEval

\\

\midrule

\multicolumn{13}{c}{\textbf{SDAR-1.7B-Chat}} \\
\midrule
\multirow{6}{*}{2}
& Original Model
& -
& 39.40
& 63.00
& 30.40
& 32.93

& 33.60
& 55.88
& 27.80
& 37.20

& 40.03
& -
\\
& SFT
& \red{\ding{55}}
& 43.00
& 61.79
& 30.00
& 34.76

& 36.80
& 62.55
& 27.20
& 37.80

& 41.74
& \textcolor{checkgreen}{$\uparrow$ 4.28}
\\
\cmidrule{2-13}
& ReDi
& \textcolor{checkgreen}{\ding{51}}
& 40.60
& 63.99
& 13.20
& 16.46

& 36.40
& 62.17
& 12.80
& 13.41

& 32.38
& \red{$\downarrow$ 19.11}
\\
& dParallel
& \textcolor{checkgreen}{\ding{51}}
& 43.40
& 68.23
& 22.20
& 24.39

& 45.20
& 67.70
& 23.20
& \textbf{26.83}

& 40.14
& \textcolor{checkgreen}{$\uparrow$ 0.29}


\\
\rowcolor{gray!15}
& \textbf{\texttt{T3D}} (Ours)
& \textcolor{checkgreen}{\ding{51}}
& \textbf{47.00}
& \textbf{70.96}
& \textbf{27.20}
& \textbf{30.49}

& \textbf{47.80}
& \textbf{68.84}
& \textbf{26.60}
& 25.61

& \textbf{43.06}
& \textbf{\textcolor{checkgreen}{$\uparrow$ 7.59}}
\\
\midrule
\multirow{6}{*}{4}
&Original Model
& - 

& 5.00
& 13.34
& 10.60
& 12.20

& 4.80
& 12.74
& 10.20
& 10.37

& 9.91
& -
\\
& SFT
& \red{\ding{55}}

& 22.40
& 36.62
& 6.20
& 5.49

& 20.00
& 39.65
& 4.40
& 7.93

& 17.84
& \textcolor{checkgreen}{$\uparrow$ 80.05}
\\
\cmidrule{2-13}

& ReDi
& \textcolor{checkgreen}{\ding{51}}

& 15.00
& 32.45
& 3.40
& 5.49

& 12.80
& 29.72
& 4.00
& 4.88

& 13.47
& \textcolor{checkgreen}{$\uparrow$ 35.95}
\\
& dParallel
& \textcolor{checkgreen}{\ding{51}}
& 22.80
& \textbf{45.26}
& \textbf{10.20}
& 12.20

& \textbf{25.40}
& \textbf{42.91}
& \textbf{10.40}
& 11.59

& \textbf{22.60}
& \textbf{\textcolor{checkgreen}{$\uparrow$ 128.09}}



\\
\rowcolor{gray!15}
& \textbf{\texttt{T3D}} (Ours)
& \textcolor{checkgreen}{\ding{51}}

& \textbf{25.60}
& 42.91
& 9.40
& \textbf{15.24}

& 24.40
& 37.38
& 9.20
& \textbf{14.02}

& 22.27
& \textcolor{checkgreen}{$\uparrow$ 124.79}
\\
\midrule

\multicolumn{13}{c}{\textbf{SDAR-4B-Chat}} \\
\midrule
\multirow{6}{*}{2}
& Original Model
& -
& 54.40
& 78.77
& 34.20
& 49.39

& 49.60
& 72.33
& 33.40
& 46.95

& 52.38
& -
\\
& SFT
& \red{\ding{55}}
& 54.60
& 54.60
& 26.80
& 37.20

& 54.44
& 77.41
& 25.60
& 29.88

& 46.76
& \red{$\downarrow$ 10.73}
\\
\cmidrule{2-13}
& ReDi
& \textcolor{checkgreen}{\ding{51}}
& 41.00
& 73.62
& 20.00
& 21.95

& 23.60
& 71.87
& 19.20
& 23.17

& 36.80
& \red{$\downarrow$ 29.74}
\\
& dParallel
& \textcolor{checkgreen}{\ding{51}}
& 52.60
& 76.57
& 23.80
& 39.63

& 51.20
& 75.97
& 18.20
& 28.66

& 45.83
& \red{$\downarrow$ 12.51}


\\
\rowcolor{gray!15}
& \textbf{\texttt{T3D}} (Ours)
& \textcolor{checkgreen}{\ding{51}}
& \textbf{60.00}
& \textbf{83.85}
& \textbf{38.80}
& \textbf{51.83}

& \textbf{61.60}
& \textbf{81.96}
& \textbf{37.00}
& \textbf{56.10}

& \textbf{58.89}
& \textbf{\textcolor{checkgreen}{$\uparrow$ 12.43}}
\\
\midrule
\multirow{6}{*}{4}
& Original Model
& -
& 13.80
& 41.09
& 14.00
& 18.29

& 16.80
& 41.02
& 10.00
& 16.46

& 21.43
& -
\\
& SFT
& \red{\ding{55}}
& 39.00
& 48.14
& 9.00
& 15.85

& 40.20
& 55.42
& 8.80
& 11.59

& 28.50
& \textcolor{checkgreen}{$\uparrow$ 32.98}
\\
\cmidrule{2-13}
& ReDi
& \textcolor{checkgreen}{\ding{51}}
& 25.40
& 53.30
& 5.00
& 7.32

& 20.20
& 47.84
& 6.80
& 6.71

& 21.57
& \textcolor{checkgreen}{$\uparrow$ 0.65}
\\
& dParallel
& \textcolor{checkgreen}{\ding{51}}
& 34.20
& 45.94
& 13.20
& 20.73

& 40.80
& 53.83
& 9.60
& 20.12

& 29.80
& \textcolor{checkgreen}{$\uparrow$ 39.05}
\\


\rowcolor{gray!15}
& \textbf{\texttt{T3D}} (Ours)
& \textcolor{checkgreen}{\ding{51}}
& \textbf{47.80}
& \textbf{69.90}
& \textbf{22.60}
& \textbf{23.78}

& \textbf{44.80}
& \textbf{63.99}
& \textbf{21.20}
& \textbf{23.17}

& \textbf{39.66}
& \textbf{\textcolor{checkgreen}{$\uparrow$ 85.02}}
\\
\bottomrule
\end{tabular}
}

\label{tab:tokps_main}
\vspace{-2mm}
\end{table*}

\paragraph{Baselines.}
We compare T3D with representative few-step diffusion language model baselines:
\emph{ReDi}~\cite{yoo2025redi}, \emph{dParallel}~\cite{chen2025dparallel}, and
\emph{SFT} on real data as a supervised reference.
For LLaDA experiments, we additionally include \emph{CDLM}~\cite{kim2025cdlm},
which accelerates diffusion language models through system-level and training-based designs.
All training-based baselines and T3D are trained until convergence.

\vspace{-10pt}
\paragraph{Models and Benchmarks.}
We evaluate T3D on both block-diffusion and full-diffusion language models.
For block diffusion, we use \textbf{SDAR-1.7B-Chat} and \textbf{SDAR-4B-Chat}~\cite{cheng2025sdar};
for full diffusion, we use \textbf{LLaDA-8B-Instruct}~\cite{nie2025large}.
We evaluate on four reasoning and code-generation benchmarks:
\textbf{MATH500}~\cite{lightman2023let}, \textbf{GSM8K}~\cite{cobbe2021training}, \textbf{MBPP}~\cite{austin2021program}, and \textbf{HumanEval}~\cite{chen2021evaluating}.
These tasks require multi-step reasoning, making them sensitive to quality degradation under aggressive step compression.

\vspace{-10pt}
\paragraph{Metrics.}
For few-step decoding and full-decoding preservation, we report \textbf{Accuracy}.
For dynamic decoding, we additionally report throughput and averaged tokens per decoding steps.
For LLaDA coding tasks, we report \textbf{Extraction Rate (ER)}, following the
limited executable-solution extraction ability of the base model.

\vspace{-10pt}
\paragraph{Training Data and Implementation.}
For self-distillation methods, we collect teacher-generated trajectories from
the corresponding training sets: MATH~\cite{hendrycks2021measuring} for mathematical reasoning and
PrimeIntellect~\cite{jaghouar2024intellect} for code generation.
Unless otherwise specified, trajectories are generated with static decoding and
low-confidence remasking. 
During T3D training, the DDO reference model is periodically updated from the
current student, and we mix random tokens into training inputs to improve robustness.
All trainable methods are fine-tuned using full-parameter training on
$8\times$ NVIDIA A100-40GB GPUs.
More implementation details, including trajectory construction, decoding settings, and training cost, are provided
in~\appref{app:implement}.

\begin{table*}[t]
\caption{
\textbf{Few-step accuracy comparison on LLaDA.}
Following the same protocol as Table~\ref{tab:tokps_main}, we compare T3D with existing few-step decoding and self-distillation baselines under different TokPS settings. \textbf{T3D achieves the best average accuracy} at both TokPS \(=4\) and TokPS \(=8\).
}
\centering
\small
\resizebox{0.8\linewidth}{!}{%
\setlength{\tabcolsep}{4pt}
\begin{tabular}{cl cccc cc} 
\toprule
\textbf{{TokPS}}
& \textbf{{Method}} 
& MATH500
& GSM8K
& MBPP
& HumanEval
& \textbf{{AVG.}}
& \textbf{{Gains (\%)}}

\\
\midrule
\multirow{5}{*}{4}
& Original Model
&24.80
&70.43
&91.80
&87.80

& 68.71
& -
\\
&ReDi
&25.20
&68.39
&93.80
&91.50

& 69.72
& \textcolor{checkgreen}{$\uparrow$ 1.48}
\\
&dParallel
&28.40
&71.49
&94.80
&91.50

& 71.55
& \textcolor{checkgreen}{$\uparrow$ 4.13}
\\
&CDLM
&30.00
&71.70
&85.40
&85.98

& 68.27
& \red{$\downarrow$ 0.64}
\\
\rowcolor{gray!15}
&\textbf{\texttt{T3D}} (Ours)
&\textbf{30.40}
&\textbf{75.89}
&\textbf{98.20}
&\textbf{94.50}

&\textbf{74.75}
&\textcolor{checkgreen}{$\uparrow$ \textbf{8.79}}
\\
\midrule
\multirow{5}{*}{8}
& Original Model
&3.00
&18.04
&40.80
&50.00

& 17.11
& -
\\
&ReDi
&6.80
&31.24
&70.80
&66.50

& 28.71
& \textcolor{checkgreen}{$\uparrow$ 67.80}
\\
&dParallel
&15.60
&53.90
&70.60
&67.70

& 36.58
& \textcolor{checkgreen}{$\uparrow$ 113.76}
\\
&CDLM
&11.60
&46.50
&46.40
&46.95

& 27.43
& \textcolor{checkgreen}{$\uparrow$ 60.29}
\\
\rowcolor{gray!15}
&\textbf{\texttt{T3D}} (Ours)
&\textbf{25.20}
&\textbf{70.13}
&\textbf{86.60}
&\textbf{73.80}

& \textbf{47.13}
& \textcolor{checkgreen}{$\uparrow$ \textbf{175.47}}
\\
\bottomrule
\end{tabular}
}

\label{tab:tokps_main_llada}
\vspace{-4mm}
\end{table*}

\begin{wraptable}{R}{0.52\textwidth}
\vspace{-1.5em}
\noindent\begin{minipage}{\linewidth}
\centering
\small
\caption{\textbf{Preserving diffusion performance under full decoding.}
We revert few-step distilled models to full diffusion decoding using static decoding (one token per step) without additional training.
Results are reported under block size 4 and 4 steps per block, showing that \textbf{\texttt{T3D}} preserves diffusion performance. \textbf{Bold} numbers denote the best result among self-distillation methods.
}
\resizebox{\linewidth}{!}{%
\setlength{\tabcolsep}{3pt}
\vspace{-5mm}
\begin{tabular}{lcccc}
\toprule
Method & MATH500 & GSM8K & MBPP & HumanEval\\
\midrule
\multicolumn{5}{c}{\textbf{SDAR-1.7B-Chat}} \\
\midrule
Original Model
& 59.40
& 80.59
& 45.20
& 59.76
\\
SFT
& 52.00
& 73.09
& 44.20
& 60.37
\\
\midrule
ReDi
& 47.00
& 73.77
& 27.60
& 31.10
\\
dParallel           
& 0.40
& 0.23
& 34.60
& 43.29
\\
TD
& 49.80
& 72.40
& 35.20
& 32.93
\\
\rowcolor{gray!15}
\textbf{\textbf{\texttt{T3D}} (Ours)}
& \textbf{56.80}
& \textbf{78.01}
& \textbf{41.20}
& \textbf{57.32}
\\
\midrule
\multicolumn{5}{c}{\textbf{SDAR-4B-Chat}} \\
\midrule
Original Model
& 68.00
& 89.84
& 58.60
& 71.95
\\
SFT
& 60.20
& 86.05
& 50.20
& 69.51
\\
\midrule
ReDi
& 50.40
& 82.03
& 34.00
& 37.80
\\
dParallel
& 13.20
& 2.88
& 34.00
& 48.17
\\
TD
& 57.40
& 82.11
& 37.60
& 43.90
\\
\rowcolor{gray!15}
\textbf{\textbf{\texttt{T3D}} (Ours)}
& \textbf{70.00}
& \textbf{89.31}
& \textbf{54.20}
& \textbf{73.78}
\\
\bottomrule
\end{tabular}
}
\vspace{-5mm}
\label{tab:full_decoding}
\end{minipage}
\end{wraptable}
\subsection{Improving Performance of Few-Step Decoding by Self-Distillation}

\paragraph{Settings.}
We evaluate few-step decoding under high \textbf{Tokens Per Step (TokPS)}
settings, where larger TokPS corresponds to more aggressive parallel decoding.
For SDAR, we evaluate two block sizes, $4$ and $8$, with TokPS $=2$ and $4$.
For LLaDA, we set the maximum generation length to $1024$, use block size $32$,
and evaluate TokPS $=4$ and $8$.
These settings cover both moderate and highly compressed decoding regimes.

\vspace{-10pt}
\paragraph{Results.}
\Tabref{tab:tokps_main} and~\Tabref{tab:tokps_main_llada} report few-step
accuracy across SDAR and LLaDA models.
Overall, \textbf{\texttt{T3D}} is consistently among the strongest
self-distillation methods and achieves the best average performance in most
settings.
The gains are especially pronounced under more aggressive decoding budgets,
where competing methods often degrade substantially.
These results show that \textbf{\texttt{T3D}} better preserves generation quality
when the diffusion process is compressed to only a few steps.

\vspace{-3mm}
\subsection{Preserving Diffusion Performance under Full Decoding}
\label{sec:full-decoding}

\textbf{Settings.}\quad 
In this experiment, we investigate whether few-step distillation leads to
\emph{diffusion property forgetting}, i.e., whether a model optimized for
compressed decoding degrades when reverted to the original full diffusion
process. To evaluate this, we take models distilled for few-step generation and directly restore them to full diffusion decoding using static decoding strategy, decoding one token per step without any additional training.

\textbf{Results.}\quad \Tabref{tab:full_decoding} reports the results.
Across both SDAR-1.7B-Chat and SDAR-4B-Chat, our methods preserve strong
performance under full decoding.
In particular, \textbf{\texttt{T3D}} achieves performance nearly identical to the
original pretrained model on all benchmarks, and in some cases slightly
outperforms it.
In contrast, prior baselines such as ReDi and dParallel exhibit substantial degradation.

\textbf{Discussions.}\quad These results indicate that trajectory self-distillation does not overfit
to few-step decoding, but instead preserves the model’s fine-grained
denoising capability.
Overall, \textbf{our approach enables few-step generation without sacrificing full diffusion performance}.
\begin{figure}
    \centering
    \includegraphics[width=\linewidth]{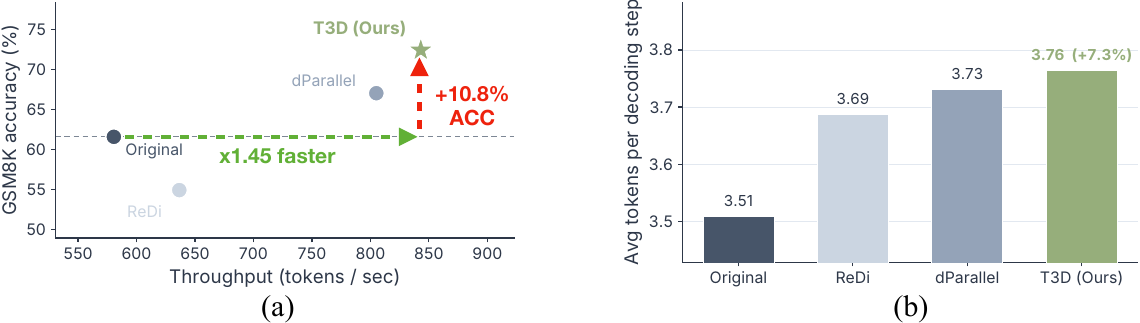}
\caption{
Dynamic decoding results on GSM8K using SDAR-4B-Chat.
(a) Accuracy-throughput trade-off. T3D improves GSM8K accuracy by $+10.8\%$
over the original model while achieving $1.45\times$ higher throughput.
(b) Average decoded tokens per step. T3D decodes more tokens per step under the
same confidence threshold, indicating that it produces more confident predictions
for adaptive decoding. Full results are provided in~\appref{app:dynamic-decoding}.
}
\label{fig:dynamic-decoding}
\end{figure}

\subsection{Experiments on Dynamic Decoding}

\vspace{-10pt}
\paragraph{Settings.}
Dynamic decoding~\cite{wu2025fast, yang2025wavefrontdiffusion} adaptively
determines how many tokens to decode at each step based on model confidence.
Although T3D is trained under fixed static step budgets and our main experiments
use static decoding for controlled comparison, we further evaluate whether the
learned few-step model remains effective when combined with adaptive
decoding strategy.
All dynamic decoding experiments use block size $4$, $4$ steps per block, and a
fixed confidence threshold of $0.9$.

\textbf{Results.}\quad
\Figref{fig:dynamic-decoding} visualizes dynamic decoding on GSM8K, with full
results reported in~\appref{app:dynamic-decoding}.
Under the same dynamic decoding rule, T3D improves the original model by
$+10.8\%$ absolute accuracy while achieving $1.45\times$ higher throughput
(\Figref{fig:dynamic-decoding} a).
It also decodes more tokens per step on average
(\Figref{fig:dynamic-decoding} b), suggesting that T3D produces more confident
predictions and enables larger adaptive decoding steps.
These results show that T3D remains effective beyond the static decoding regime
used during training.



\subsection{Ablation Study}
\label{sec:ablation}
\begin{wraptable}{r}{0.5\linewidth}
\vspace{-1.2em}
\centering
\caption{
Component-wise ablation under aggressive few-step decoding.
Results are averaged over four benchmarks using SDAR-4B-Chat with TokPS $=4$ and block size $=8$.
Setting (c) corresponds to the full T3D objective.
}
\label{tab:ablation-main}
\vspace{-0.5em}
\resizebox{\linewidth}{!}{
\setlength{\tabcolsep}{2pt}
\begin{tabular}{lcc}
\toprule
Method & Acc. & Gains (\%) \\
\midrule
Original & 16.80 & -- \\
(a) \quad + TD ($\mathcal{L}_{\mathrm{traj}}$ in~\Eqref{eq:traj-distill}) & 38.80 & \textcolor{checkgreen}{$\uparrow$ 130.95} \\
(b) \quad + DDO ($\mathcal{L}_{\mathrm{traj\text{-}DDO}}$ in~\Eqref{eq:total-ddo}) & 43.20 & \textcolor{checkgreen}{$\uparrow$ 157.14} \\
\rowcolor{gray!12}
(c) \quad + Path Loss ($\mathcal{L}_{\mathrm{path}}$ in~\Eqref{eq:loss-path}) & \textbf{45.00} & \textbf{\textcolor{checkgreen}{$\uparrow$ 167.86}} \\
\bottomrule
\end{tabular}
}
\vspace{-1.0em}
\end{wraptable}
We conduct a component-wise ablation to examine the contribution of each design in T3D.
As shown in~\Tabref{tab:ablation-main}, trajectory distillation provides the main improvement over the original model, confirming the importance of matching teacher rollout trajectories under aggressive few-step decoding. 
Adding DDO further improves performance, suggesting that mode-seeking trajectory matching produces sharper predictions under tight decoding budgets. 
Finally, the path-consistency loss provides an additional gain by emphasizing early decoded tokens, which helps reduce error propagation. 
Full ablation results across more settings are provided in~\Secref{app:ablation}.

\vspace{-1mm}
\section{Conclusion}
We presented \textbf{\texttt{T3D}}, a simple and effective framework for few-step diffusion language modeling based on \emph{trajectory self-distillation}. Our key insight is that few-step decoding in MDLMs is bottlenecked by factorization error, and that the teacher’s full generative trajectory provides much richer supervision than the endpoint alone for reducing this error. We further uncover a fundamental failure mode of prior endpoint-based, rectified-flow-style self-distillation in MDLMs, and show that \textbf{\texttt{T3D}} avoids this issue by distilling over intermediate decoding intervals, where the reverse process remains informative. Across reasoning and code-generation benchmarks, \textbf{\texttt{T3D}} consistently outperforms prior few-step DLLM methods, substantially narrowing the gap to full-step diffusion decoding.

\textbf{Limitations.}\quad 
Our method has two inherent limitations. First, because it relies on self-distillation, student performance is ultimately bounded by teacher quality. Second, trajectory collection requires full-step teacher rollouts, incurring an offline cost that scales with dataset size and decoding budget.



\bibliographystyle{abbrv}
\bibliography{ref}

\newpage

\appendix

\appendix

\section*{\LARGE Appendix}
\markboth{Appendix}{Appendix}

\startcontents[appendix]

\begingroup
\contentsmargin{0pt}

\titlecontents{section}
  [1.4em]
  {}
  {\makebox[2.0em][l]{\contentslabel{1.6em}}}
  {}
  {\titlerule*[0.5pc]{.}\contentspage}

\titlecontents{subsection}
  [3.2em]
  {}
  {\makebox[3.0em][l]{\contentslabel{2.6em}}}
  {}
  {\titlerule*[0.5pc]{.}\contentspage}

\printcontents[appendix]{}{1}{\setcounter{tocdepth}{2}}
\endgroup
\vspace{1em}

\section{Algorithm}
\label{app:algo}

In this section, we describe the training algorithm of \textbf{\texttt{T3D}}. \Algref{alg:t3d} provides the pseudocode of the full training procedure, while \Figref{fig:method} presents a high-level overview of the method for better conceptual understanding.
\begin{algorithm}[H]
\caption{\textbf{\texttt{T3D}} Training}
\label{alg:t3d}
\begin{algorithmic}[1]

\REQUIRE Teacher model $p_{\phi}$
\REQUIRE Student model $p_{\theta}$ (initialized from teacher)
\REQUIRE Path regularization weight $\lambda$

\STATE Sample trajectory pairs $(\rvx_0, \rvx_t) \sim p_{\phi}$

\REPEAT
    \STATE Set reference model $p_{\theta_{ref}} \leftarrow \text{StopGrad}(p_{\theta})$
    
    \STATE Compute trajectory DDO loss $\mathcal{L}_{\mathrm{traj-DDO}}$
    
    \STATE Compute path consistency loss $\mathcal{L}_{\mathrm{path}}$
    
    \STATE Update student model using
    \[
    \mathcal{L}
    =
    \mathcal{L}_{\mathrm{traj-DDO}}
    +
    \lambda \mathcal{L}_{\mathrm{path}}
    \]

\UNTIL{convergence}

\OUTPUT $p_{\theta}$
\end{algorithmic}
\end{algorithm}

\section{Proof of Theoretical Analysis}
\label{app:proof}
In this section, we provide detailed proofs for the theoretical results presented in the main paper. Our analysis focuses on understanding the behavior of trajectory self-distillation under few-step decoding and its effect on the factorization properties of the reverse diffusion process.

\begin{assumption}
\label{assump:ideal-optimality}
Following~\cite{yoo2025redi}, we assume that the trained student model $p_{\theta^{*}}$ attains the optimum of the MDLM objective:
\begin{align}
\forall\, t\in [0,1], \quad p_{\theta^{*}}=
    \arg\min_{p_{\theta}}\;
    \mathrm{KL}\Big(
   p_{\phi}(\rvx_s \mid \rvx_t)
    \,\big\|\,
    p_\theta(\rvx_s \mid \rvx_t)
    \Big).
\end{align}
\end{assumption}

\begin{assumption}
\label{assump:convex}
Let $P$ be the family of T-step decoding processes. We assume that $\forall t \in [T]$, $p_\theta (\rvx_s|\rvx_t)$ lies within the log-convex hull of $P$. 
\end{assumption}

\begin{lemma}[\textbf{Pythagorean Inequality for KL Divergence}~\cite{wolfer2024geometric}]
\label{lemma:pytha-kl}
Let $\mathcal{Q}$ be a log-convex set. If
$q^* = \arg\min_{q \in \mathcal{Q}} \mathrm{KL}(p \| q)$ and $r \in \mathcal{Q}$,
then
\[
\mathrm{KL}(p \| r)
\ge
\mathrm{KL}(p \| q^*)
+
\mathrm{KL}(q^* \| r).
\]
\end{lemma}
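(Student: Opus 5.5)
We plan to prove the inequality by a first-order optimality argument along the \emph{geometric} (log-linear) path joining $q^*$ and $r$. This path stays in $\mathcal{Q}$ precisely because $\mathcal{Q}$ is log-convex. For $\lambda\in[0,1]$ we define
\[
q_\lambda(\rvx) \;=\; \frac{q^*(\rvx)^{1-\lambda}\, r(\rvx)^{\lambda}}{Z(\lambda)},
\qquad
Z(\lambda) \;=\; \sum_{\rvx} q^*(\rvx)^{1-\lambda}\, r(\rvx)^{\lambda}.
\]
Log-convexity of $\mathcal{Q}$ means exactly that $q_\lambda\in\mathcal{Q}$ for every $\lambda\in[0,1]$, with $q_0=q^*$ and $q_1=r$.

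We then study the scalar function $f(\lambda)=\mathrm{KL}(p\,\|\,q_\lambda)$. Expanding the logarithm gives
\[
f(\lambda)=\sum_{\rvx} p\log p \;-\;(1-\lambda)\sum_{\rvx} p\log q^* \;-\;\lambda\sum_{\rvx} p\log r \;+\;\log Z(\lambda).
\]
This is affine in $\lambda$ plus the convex function $\log Z(\lambda)$, a log-sum-exp of affine functions. Since $q^*$ minimizes $\mathrm{KL}(p\,\|\,\cdot)$ over $\mathcal{Q}$ and the whole path lies in $\mathcal{Q}$, $f$ attains its minimum on $[0,1]$ at $\lambda=0$. Hence the right derivative satisfies $f'(0^+)\ge 0$.

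Next we compute this derivative. We have
\[
f'(0^+)=\sum_{\rvx} p\log\frac{q^*}{r}+\frac{Z'(0)}{Z(0)},
\qquad Z(0)=1,
\qquad Z'(0)=\sum_{\rvx} q^*\log\frac{r}{q^*}=-\mathrm{KL}(q^*\,\|\,r).
\]
The condition $f'(0^+)\ge0$ therefore reads $\sum_{\rvx} p\log(q^*/r)\ge \mathrm{KL}(q^*\,\|\,r)$. The left-hand side is exactly $\mathrm{KL}(p\,\|\,r)-\mathrm{KL}(p\,\|\,q^*)$, and rearranging gives the claimed inequality
\[
\mathrm{KL}(p\,\|\,r)\;\ge\;\mathrm{KL}(p\,\|\,q^*)+\mathrm{KL}(q^*\,\|\,r).
\]

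The main obstacle is rigor at the boundary rather than the algebra. We must justify that $f$ is right-differentiable at $0$ with the stated derivative, and we must handle support mismatches.

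\textbf{Support issues.} If $\mathrm{KL}(p\,\|\,r)=\infty$, the inequality is trivial. Otherwise $p\ll r$, and $\mathrm{KL}(p\,\|\,q^*)<\infty$ since $q^*$ is a minimizer, so $p\ll q^*$ as well. If $q^*$ charges points where $r=0$, then $\mathrm{KL}(q^*\,\|\,r)=\infty$. In that case we must check that $f'(0^+)$ in fact equals $-\infty$, because $Z'(0^+)=-\infty$ there. That would contradict minimality of $q^*$, so this case cannot occur.

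\textbf{Differentiability.} In the finite-vocabulary setting of the paper, $Z$ is a finite sum of exponentials $e^{(1-\lambda)\log q^*+\lambda\log r}$ over the common support. It is therefore smooth on $[0,1]$, and differentiating under the sum is immediate. With these checks the proof is complete.
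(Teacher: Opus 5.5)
Your proof is correct. The paper gives no argument of its own for this lemma; it imports it from \cite{wolfer2024geometric}. So your derivation does not compete with a proof in the paper, it supplies one. You use first-order optimality of $q^*$ along the normalized geometric mixture $q_\lambda \propto (q^*)^{1-\lambda} r^{\lambda}$, which stays in $\mathcal{Q}$ by log-convexity. This gives $f'(0^+) = \mathrm{KL}(p\|r) - \mathrm{KL}(p\|q^*) - \mathrm{KL}(q^*\|r) \ge 0$. This is the classical variational argument for reverse $I$-projections onto log-convex sets (as in Csisz\'ar--Mat\'u\v{s}). Compared with a bare citation, it is self-contained and checkable in exactly the setting the paper uses, the finite sample space $\mathcal{V}^L$, and it handles the support edge cases explicitly. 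Going beyond finite spaces would require justifying differentiation of $Z$ under the integral, which is where a general reference earns its keep.

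Two small precision points, neither affecting validity. First, in the support-mismatch case the mechanism is a jump of $Z$ at $0$, not an infinite slope. If $\varepsilon = q^*(\{r=0\}) > 0$, then for $\lambda \in (0,1)$ the mixture lives on $\mathrm{supp}(q^*) \cap \mathrm{supp}(r) \supseteq \mathrm{supp}(p)$ and $Z(\lambda) \to 1-\varepsilon$. Hence $f(\lambda) \to \mathrm{KL}(p\|q^*) + \log(1-\varepsilon) < f(0)$, which contradicts minimality directly. Your difference quotient does diverge to $-\infty$, but for this reason. Second, $Z$ is smooth on $[0,1)$ rather than on $[0,1]$: it can jump at $\lambda = 1$ when $r$ charges points outside $\mathrm{supp}(q^*)$. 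You only use its behavior near $0$, so this is harmless.
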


\begin{theorem}[\textbf{Trajectory Distillation Induces Lower Conditional Total Correlation}]
\label{thm:traj-tc-ap}
Let $p_\phi$ be a pretrained teacher model and $p_\theta$ a student model. Define the teacher trajectory joint distribution as
$
J_\phi(\rvx_s,\rvx_t)
$
and the student-induced joint distribution as
$
J_\theta(\rvx_s,\rvx_t).
$
Let $\theta^*$ be the optimal solution to \Eqref{eq:traj-distill}, and let $J_{\theta^*}$ denote the corresponding student joint distribution. Then, for any $s<t$, under mild assumption, the following inequality holds:
\begin{align}
    \mathbb{E}_t\!\left[TC_{J_{\theta^*}}(\rvx_s \mid \rvx_t)\right]
    \le
    \mathbb{E}_t\!\left[TC_{J_\phi}(\rvx_s \mid \rvx_t)\right].
\end{align}
\begin{proof}

\begin{align}
TC_{J_\phi}(\rvx_s\mid \rvx_t)
&=
\mathbb{E}_{\rvx_t\sim p_\phi(\rvx_t)}
\bigg[
\mathrm{KL}(
p_\phi(\rvx_s\mid \rvx_t)
\,\bigg\|\,
\prod_{i=1}^{L} p_\phi(\rvx_s^i \mid \rvx_t)
\bigg)
\bigg]\\
&\ge
\mathbb{E}_{\rvx_t\sim p_\phi(\rvx_t)}
\bigg[
\mathrm{KL}\bigg(
p_\phi(\rvx_s\mid \rvx_t)
\,\bigg\|\,
p_{\theta^*}(\rvx_s \mid \rvx_t)
\bigg)
\nonumber\\
&\qquad\qquad\qquad
+
\mathrm{KL}\bigg(
p_{\theta^*}(\rvx_s\mid \rvx_t)
\,\bigg\|\,
\prod_{i=1}^{L} p_\phi(\rvx_s^i \mid \rvx_t)
\bigg)
\bigg] \\
&\ge \mathbb{E}_{\rvx_t\sim p_\phi(\rvx_t)}
\bigg[\mathrm{KL}\bigg(
p_{\theta^*}(\rvx_s\mid \rvx_t)
\,\bigg\|\,
\prod_{i=1}^{L} p_\phi(\rvx_s^i \mid \rvx_t)
\bigg)
\bigg] \\
&= \mathbb{E}_{\rvx_t\sim p_\phi(\rvx_t)}
\bigg[
\mathrm{KL}\bigg(
p_{\theta^*}(\rvx_s\mid \rvx_t)
\,\bigg\|\,
\prod_{i=1}^{L} p_{\theta^*}(\rvx_s^i \mid \rvx_t)
\bigg)
\nonumber\\
&\qquad\qquad\qquad
+\sum_i^N \mathrm{KL}\bigg(
p_{\theta^*}(\rvx_s^i\mid \rvx_t)
\,\bigg\|\,
\prod_{i=1}^{L} p_\phi(\rvx_s^i \mid \rvx_t)
\bigg)
\bigg] \\
&\geq \mathbb{E}_{\rvx_t\sim p_\phi(\rvx_t)}
\bigg[
\mathrm{KL}\bigg(
p_{\theta^*}(\rvx_s\mid \rvx_t)
\,\bigg\|\,
\prod_{i=1}^{L} p_{\theta^*}(\rvx_s^i \mid \rvx_t)
\bigg)
\bigg] \\
&= TC_{J_{\theta^*}}(\rvx_s\mid \rvx_t).
\end{align}
The first inequality follows from~\Assumptionref{assump:ideal-optimality}, which equates optimizing trajectory self-distillation with minimizing the expected KL divergence. The result then follows by applying \Lmmref{lemma:pytha-kl} and \Assumptionref{assump:convex}. All assumptions invoked here are inherited from~\cite{yoo2025redi}.
\end{proof}
\end{theorem}

\begin{corollary}[\textbf{Endpoint-only Distillation Does Not Reduce Conditional Total Correlation}]
\label{cor:endpoint-vacuous}
In MDLMs, the prior $p(\rvx_T) = \delta_{\mathbf{m}}$ is deterministic. Therefore, for any model,
$
    p(\rvx_0 \mid \rvx_T) = q(\rvx_0),
$
and consequently,
\begin{align}
    TC_J(\rvx_0 \mid \rvx_T)
    =
    \mathrm{KL}\Big(
    q(\rvx_0)
    \,\big\|\,
    \prod_{i=1}^{L} q(\rvx_0^i)
    \Big).
\end{align}
This quantity is a fixed constant of the data distribution and thus cannot be reduced by endpoint-only distillation.
\end{corollary}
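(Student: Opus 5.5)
The plan is to compute both conditionals at the endpoint $t=T$ directly from the definition of Conditional TC in \Eqref{eq: tc}, using only that the masked prior is a point mass. I will not use the optimization arguments behind Theorem~\ref{thm:traj-tc}.

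\textbf{Step 1: the outer expectation collapses.} The prior is $p(\rvx_T)=\delta_{\mathbf{m}}$, where $\mathbf{m}=(\vm,\dots,\vm)$ is the all-mask sequence. Every joint $J(\rvx_0,\rvx_T)=p(\rvx_T)\,p(\rvx_0\mid\rvx_T)$, whether teacher- or student-induced, is therefore supported on $\rvx_T=\mathbf{m}$. The expectation over $\rvx_T$ in \Eqref{eq: tc} reduces to evaluation at the single point $\mathbf{m}$. Consequently $p(\rvx_0\mid\rvx_T)=p(\rvx_0\mid\mathbf{m})$ equals the marginal $\sum_{\rvx_T}J(\rvx_0,\rvx_T)$. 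I denote this marginal $q(\rvx_0)$.

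\textbf{Step 2: the factorized reference is built from marginals of $q$.} Marginalizing Step 1 coordinatewise gives $p(\rvx_0^i\mid\rvx_T)=q(\rvx_0^i)$ almost surely, for each $i$. Substituting both identities into \Eqref{eq: tc} yields
\[
TC_J(\rvx_0\mid\rvx_T)=\mathrm{KL}\Big(q(\rvx_0)\,\big\|\,\prod_{i=1}^L q(\rvx_0^i)\Big).
\]
This is the unconditional total correlation of the endpoint distribution.

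\textbf{Step 3: interpretation, which I expect to be the main obstacle.} The formula shows that endpoint TC is a functional only of the endpoint distribution $q$, with no dependence on how the model factorizes. It therefore "cannot be reduced" in the following sense. Endpoint-only (rectified-flow-style) distillation trains the student to reproduce the teacher's couplings $(\rvx_T,\rvx_0)$ from teacher samples. At its optimum, the student's endpoint distribution equals the teacher's, which approximates $p_{\mathrm{data}}$. Hence TC at $(0,T)$ is pinned to the total correlation of that fixed target.

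In continuous or uniform-prior settings, rectification changes the coupling between noise and data. That can lower conditional TC while keeping the marginals fixed. Here the degenerate prior admits only one coupling, the product $\delta_{\mathbf{m}}\otimes q$, so there is nothing to rectify.

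I would state the mild caveat explicitly: "any model" should be read as any model whose endpoint marginal matches the distillation target. The phrase "for any model, $p(\rvx_0\mid\rvx_T)=q(\rvx_0)$" is then true by definition of $q$ as that model's endpoint marginal.
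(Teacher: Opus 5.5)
Your proposal is correct and follows essentially the same route as the paper: the point-mass prior forces the joint to be $\delta_{\mathbf{m}}\otimes q$, so the conditional at $\rvx_T=\mathbf{m}$ equals the endpoint marginal, and the conditional TC collapses to the unconditional total correlation $\mathrm{KL}\big(q(\rvx_0)\,\|\,\prod_{i=1}^L q(\rvx_0^i)\big)$. Your Step~3 caveat adds something the paper's one-line proof leaves implicit in the phrase ``regardless of the model'': $q$ must be read as the model's own endpoint marginal, and it is pinned to the teacher/data distribution only at the endpoint-distillation optimum.
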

\begin{proof}
Since $p(\rvx_T) = \delta_{\mathbf{m}}$, the joint is uniquely $J(\rvx_0, \rvx_T) = q(\rvx_0)\,\delta_{\mathbf{m}}(\rvx_T)$, so $p(\rvx_0 \mid \rvx_T) = q(\rvx_0)$ regardless of the model, and the Conditional TC reduces to the unconditional Total Correlation of the data distribution.
\end{proof}

\section{Implementation Details.}
\label{app:implement}
In this section, we provide implementation details of our method and experimental setup.
\subsection{Mixture of Random Tokens}
\label{app:implement-random}
As described in~\Secref{sec:settings}, we replace some mask tokens with random tokens sampled from the vocabulary $\mathcal{V}$ uniformly. This design is inspired by recent work on one-step discrete generative
modeling for images~\cite{zhu2025di},
where mixing mask tokens with uniformly sampled tokens is shown to improve
training stability and robustness. Formally, let $\rvx = (x_1, \dots, x_L)$ denote a token sequence of length $L$,
and let $\mathcal{V}$ denote the vocabulary.
For each position $i$, we introduce a binary replacement indicator
$r_i \sim \mathrm{Bernoulli}(p_{\mathrm{rand}})$,
where $p_{\mathrm{rand}}$ is the probability of replacing a mask token with
a random token.

\subsection{Multi-Round and Self-Play Update}
\label{app:implement-round}
In our loss function~\Eqref{eq:step-ddo}, we introduce a reference model $p_{\theta_{\text{ref}}}$, which is initialized from the student model $p_\theta$. Following the setup of prior work~\cite{zheng2025direct}, we adopt a multi-round refinement strategy for training. Formally, this process can be written as:
\[
\begin{aligned}
&\text{Round } n:\quad
 \cdots 
\;\rightarrow\;
\underbrace{p_{\theta^{*}_{n-1}}}_{\text{Reference}}
\;\rightarrow\;
\underbrace{
\sigma\!\left(
\beta \log \frac{p_{\theta_n}}{p_{\theta^{*}_{n-1}}}
\right)
}_{\text{Discriminator}}
\\[8pt]
&\text{Round } n+1:\quad
 \rightarrow\;
\underbrace{p_{\theta^{*}_{n}}}_{\text{Reference}}
\;\rightarrow\;
\cdots
\end{aligned}
\]
where $\theta^{*}_{n-1}$ denotes the best-performing student model obtained in round $n$. In each round, the reference model serves as a fixed generator. In our experiments, we update the reference model every 10 global steps, which corresponds to one round in our training schedule.

\subsection{Prompts}
\label{app:implement-prompts}
In this section, we present the prompts used in our experiments. These prompts are used to query the model and generate responses, which are then collected as trajectories for training.
\begin{tcolorbox}[
    breakable,
    enhanced,
    left=-0.5cm, right=-0.5cm, top=2pt, bottom=2pt,
    enlarge top by=0.1cm,
    enlarge bottom by=0.1cm,
    title={\hspace{0.5cm}Prompt For Math Reasoning},
    fonttitle=\bfseries\small
]
\begin{quote}\small
\textbf{[User]}:\quad \texttt{\{problem\}}. Please reason step by step, and put your final answer within \texttt{boxed\{\}}. You are a precise math problem solver. Solve the given math problem step by step. \\
\textbf{[Assistant]}: 
\end{quote}
\end{tcolorbox}

\begin{tcolorbox}[
    breakable,
    enhanced,
    left=-0.5cm, right=-0.5cm, top=2pt, bottom=2pt,
    enlarge top by=0.1cm,
    enlarge bottom by=0.1cm,
    title={\hspace{0.5cm}Prompt For Code Generation},
    fonttitle=\bfseries\small
]
\begin{quote}\small
\textbf{[User]}:\quad This is the problem: \texttt{\{problem\}}. Place your code within a single Python code block \texttt{```python```}. Do not include more than one code block. \\
\textbf{[Assistant]}: 
\end{quote}
\end{tcolorbox}

\subsection{Accelerated Inference}
\label{app:implement-acc}
For all SDAR-series experiments, rollouts are performed using \texttt{JetEngine}~\footnote{https://github.com/Labman42/JetEngine}, a vLLM-style inference framework tailored for diffusion language models. \texttt{JetEngine} is a lightweight yet high-performance inference engine designed for SDAR models and other block-wise diffusion decoding architectures. It supports both dense and MoE models, as well as Tensor Parallel distributed inference, and achieves significant speedups compared to naive inference implementations.

\subsection{Other Implementation Details}
\label{app:other-implement}

\paragraph{Baselines.}
We compare against \emph{ReDi}~\cite{yoo2025redi}, which learns from
teacher-generated clean samples $\vx_0$ paired with randomly corrupted noisy
samples $\vx_t$.
We also include \emph{dParallel}~\cite{chen2025dparallel}, which maximizes the
transition probability from fully masked sequences to teacher-generated clean
sequences.
\emph{SFT} is trained on real data and serves as a supervised reference rather
than a self-distillation baseline.
For LLaDA, we additionally evaluate the official \emph{CDLM}~\cite{kim2025cdlm} checkpoint
under our setting.

\paragraph{Training Data.}
For self-distillation methods, we collect model-generated responses on the
MATH training set~\cite{hendrycks2021measuring} for mathematical reasoning and
the PrimeIntellect dataset~\cite{jaghouar2024intellect} for code generation.
For SFT, we use data derived from Bespoke-Stratos-17k~\cite{labs2025mercury}.
Following prior work~\cite{wang2025diffusion}, we use open-source collections
pre-filtered to a maximum sequence length of 600 tokens.

\paragraph{Trajectory Construction.}
We prompt the teacher model to answer questions from the corresponding training
sets and collect its generated trajectories.
To improve data quality, we use low-confidence remasking
~\cite{nie2025large,wang2025revolutionizing} with static decoding, using block
size $4$ and $4$ steps per block.
To recover the generation trajectory, we record the decoding order of tokens in
the final clean sequence, following prior work~\cite{wang2025revolutionizing}.
Given a clean sequence and its decoding order, intermediate states $\rvx_t$ are
constructed by masking tokens according to the recorded order.
We also mix random tokens into the input for training robustness, following
previous work~\cite{zhu2025di}.

\paragraph{Training Cost.}
All trainable methods are trained with full-parameter fine-tuning on
$8\times$ NVIDIA A100-40GB GPUs.
For SDAR-4B-Chat, trajectory collection takes approximately 1.5 hours with
\texttt{JetEngine} acceleration\footnote{\url{https://github.com/Labman42/JetEngine}},
and T3D training takes approximately 8 hours.
Under the same hardware setting, dParallel and ReDi require roughly 4--5 hours.
Thus, T3D introduces a modest additional offline training cost from DDO, while
the resulting inference speedup applies at deployment time.

\section{Additional Experiments}
\label{app:additional-exp}
This section provides additional experiments that complement the main results. 
We report full dynamic decoding results, analyze full-step diffusion preservation, evaluate robustness across multiple seeds, and examine the generalization of T3D to open-ended language tasks.
\begin{table*}[t]
\centering
\caption{Dynamic decoding results with block size $4$, $4$ steps per block, confidence threshold $0.9$, and temperature $0.1$. We report throughput (TPS), per-sample latency (Latency), average decoding steps and sequence length (Avg Steps and Avg Length), and accuracy (Acc). \textbf{Bold} numbers indicate the best performance among baseline methods. All experiments are done using SDAR-4B-Chat model.}
\label{tab:dynamic-decoding}
\small
\resizebox{\linewidth}{!}{
\setlength{\tabcolsep}{10pt}
\begin{tabular}{llcccccc}
\toprule
Dataset & Method & TPS$\uparrow$ & Latency$\downarrow$ & Avg Steps$\downarrow$ & Avg Length & Acc$\uparrow$ \\
\midrule
\multirow{5}{*}{MATH500}
& Original   
& 657.72 
& 1.10 
& 196.19 
& 721.90 
& 39.00 
\\
\cmidrule{2-7}
& ReDi       
& 715.71 
& 1.04 
& 198.24 
& 757.05
& 27.00 
\\
& dParallel  
& 692.08 
& 0.95 
& 170.22 
& 653.98 
& 45.80
\\
& FKL         
& 693.85 
& 0.97
& 177.99 
& 678.55 
& 44.00 
\\
& \textbf{\texttt{T3D}} (Ours)
& \textbf{791.23} 
& \textbf{0.66} 
& \textbf{137.95} 
& 525.50 
& \textbf{49.40} 
\\
\midrule
\multirow{5}{*}{GSM8K}
& Original   
& 580.60 
& 0.43 
& 71.12 
& 249.52 
& 61.56 
\\
\cmidrule{2-7}
& ReDi       
& 636.58 
& 0.49 
& 84.63 
& 311.99 
& 54.89 
\\
& dParallel  
& 805.02 
& 0.39 
& 83.23 
& 310.58 
& 67.02 
\\
& FKL        
& 696.99 
& 0.47 
& 89.78 
& 330.82 
& 62.40 
\\
& \textbf{\texttt{T3D}} (Ours)        
& \textbf{843.05} 
& \textbf{0.37} 
& \textbf{83.03 }
& 312.48 
& \textbf{72.40} 
\\
\midrule
\multirow{5}{*}{MBPP}
& Original   
& 262.66 
& 0.36 
& 27.25 
& 93.64 
& 23.40 
\\
\cmidrule{2-7}
& ReDi       
& 298.83 
& 0.21 
& 17.11 
& 62.57 
& 10.00 
\\
& dParallel  
& 215.65 
& 0.63 
& 36.03 
& 135.16
& 8.40 
\\
& FKL        
& \textbf{314.99}
& 0.31 
& 26.43 
& 98.80 
& 9.80 
\\
& \textbf{\texttt{T3D}} (Ours)         
& 313.18 
& \textbf{0.19} 
& \textbf{16.94} 
& 61.62 
& \textbf{23.60} 
\\
\midrule
\multirow{5}{*}{HumanEval}
& Original   
& 175.48 
& 0.73 
& 36.56 
& 127.54 
& 33.54 
\\
\cmidrule{2-7}
& ReDi       
& 163.77 
& 0.47 
& 21.23 
& 76.75 
& 10.00 
\\
& dParallel  
& 130.34 
& 0.48 
& 17.41 
& 62.19 
& 23.78 
\\
& FKL         
& 216.39 
& 0.29 
& 17.15 
& 62.10 
& 23.17 
\\
& \textbf{\texttt{T3D}} (Ours)        
& \textbf{222.68} 
& \textbf{0.26} 
& \textbf{16.21} 
& 58.10 
& \textbf{29.27} 
\\
\bottomrule
\end{tabular}
}
\end{table*}

\subsection{Additional Results on Dynamic Decoding}
\label{app:dynamic-decoding}

We provide the full dynamic decoding results in~\Tabref{tab:dynamic-decoding}. 
For reference, we additionally report the Forward-KL variant of T3D, denoted as FKL, which corresponds to the objective in~\Eqref{eq:traj-distill}. 
All experiments use SDAR-4B-Chat with a block size of $4$, $4$ steps per block, a confidence threshold of $0.9$, and a temperature of $0.1$. 
In addition to accuracy, we report throughput, latency, average decoding steps, and output length to characterize the efficiency--quality trade-off.

Overall, T3D consistently achieves strong performance under dynamic decoding.
On MATH500 and GSM8K, T3D improves both accuracy and throughput over the original
model, showing that trajectory self-distillation remains effective even when the
number of decoded tokens is chosen adaptively at inference time. 
On code-generation benchmarks, T3D also substantially improves efficiency and
maintains competitive accuracy. 
These results support the conclusion in the main text that T3D is compatible
with adaptive decoding strategies, although it is trained under static decoding
budgets.

\subsection{Useful Exploration under Reverse-KL Training}
\label{app:reverse}
A potential concern is that the reverse-KL training objective may reduce model diversity, which could harm generation quality on open-ended tasks. However, entropy reduction is not unique to T3D; it is common across many post-training methods, including fine-tuning, RL, and distillation~\cite{gai2025differential, wang2025arbitrary, petrenkoentropy}. More importantly, recent work~\cite{lu2025onpolicydistillation} suggests that lower-entropy, mode-seeking objectives can be beneficial for reasoning, since they concentrate probability mass on coherent solution paths rather than diffuse alternatives.

We provide two additional analyses to examine whether T3D suffers from diversity collapse.

\begin{itemize}
    \item \textbf{Entropy and output diversity.}\quad 
    As shown in~\Figref{fig:dynamic-decoding} (b), T3D does not exhibit uniform diversity collapse relative to Forward-KL. Instead, it shows a stage-wise exploration--exploitation pattern: higher entropy at early decoding stages (mask ratio $=1.0$), indicating broader exploration, and lower entropy at later stages, enabling sharper refinement. This behavior is desirable for reasoning, where the model should explore possible solution paths early and refine toward a coherent answer later.

    \item \textbf{Exploration behavior on reasoning tasks.}\quad 
    As shown in~\Tabref{tab:passk_results}, we compare pass@$k$ on MATH500 for the teacher, Forward-KL baseline, and T3D. T3D outperforms Forward-KL at every $k$. Moreover, the gap between T3D and the teacher narrows as $k$ increases, indicating that T3D preserves meaningful exploration ability rather than collapsing to a narrow set of outputs.
\end{itemize}

\begin{table}[t]
\centering
\caption{
Pass@$k$ results on MATH500 using SDAR-4B-Chat with block size 4 and TokPS 2. 
T3D achieves larger gains as $k$ increases, suggesting that it preserves output diversity and benefits from test-time scaling.
}
\label{tab:passk_results}
\begin{tabular}{lccc}
\toprule
Model & pass@5 & pass@10 & pass@20 \\
\midrule
Teacher & 81.9 & 85.6 & 88.2 \\
Forward-KL & 53.1 & 63.3 & 71.6 \\
T3D (ours) & 66.0 & 74.2 & 80.4 \\
\bottomrule
\end{tabular}
\end{table}

Overall, these results suggest that T3D does not simply reduce diversity in an indiscriminate way. Instead, it preserves useful exploration for reasoning while promoting sharper refinement during decoding. This helps explain why T3D benefits from test-time scaling and consistently improves over Forward-KL under larger pass@$k$ budgets.

\begin{table}[t]
\centering
\caption{
Multi-seed results on MATH-500 and MBPP.
We report accuracy over three seeds, together with the mean, standard deviation, and sample variance.
}
\label{tab:multi-seed}
\resizebox{\linewidth}{!}{
\begin{tabular}{lccccc ccccc}
\toprule
\multirow{2}{*}{Method}
& \multicolumn{5}{c}{MATH-500}
& \multicolumn{5}{c}{MBPP} \\
\cmidrule(lr){2-6} \cmidrule(lr){7-11}
& Seed 1 & Seed 2 & Seed 3 & Mean $\pm$ Std & Var.
& Seed 1 & Seed 2 & Seed 3 & Mean $\pm$ Std & Var. \\
\midrule
Original
& 14.80 & 14.20 & 12.60 & 13.87 $\pm$ 1.14 & 1.29
& 13.00 & 15.00 & 13.40 & 13.80 $\pm$ 1.06 & 1.12 \\
ReDi
& 24.20 & 24.20 & 23.00 & 23.80 $\pm$ 0.69 & 0.48
& 6.40 & 6.80 & 6.60 & 6.60 $\pm$ 0.20 & 0.04 \\
dParallel
& 37.80 & 38.80 & 36.20 & 37.60 $\pm$ 1.31 & 1.72
& 7.00 & 6.20 & 7.00 & 6.73 $\pm$ 0.46 & 0.21 \\
\textbf{T3D (ours)}
& 46.20 & 46.00 & 46.60 & \textbf{46.27 $\pm$ 0.31} & \textbf{0.09}
& 21.60 & 21.60 & 22.40 & \textbf{21.87 $\pm$ 0.46} & 0.21 \\
\bottomrule
\end{tabular}
}
\end{table}

\subsection{Experiments with Multiple Seeds}
To evaluate the stability of T3D, we repeat the main few-step experiments on MATH-500 and MBPP with three random seeds. 
As shown in~\Tabref{tab:multi-seed}, T3D consistently outperforms all baselines across both benchmarks. 
On MATH-500, T3D achieves an average accuracy of $46.27$ with a standard deviation of only $0.31$, indicating both strong performance and low variance across seeds. 
On MBPP, T3D obtains an average accuracy of $21.87$, substantially outperforming the original model and prior few-step DLLM baselines. 
These results suggest that the gains of T3D are stable and not due to seed-specific variation.

\subsection{Generalization to Open-Ended Language Tasks}
\label{app:open-ended}
Our main experiments evaluate T3D on math and coding benchmarks, which test structured reasoning and executable generation under aggressive few-step decoding. To further examine whether T3D remains effective beyond these structured tasks, we additionally evaluate it on WinoGrande, a broader NLP benchmark that requires commonsense language understanding.

As shown in~\Tabref{tab:winogrande_results}, T3D outperforms prior few-step decoding baselines, improving over both dParallel and ReDi. This suggests that the benefits of T3D are not restricted to math or coding tasks, but also extend to more open-ended language tasks.

\begin{table}[t]
\centering
\caption{
Results on WinoGrande. T3D outperforms prior few-step decoding baselines, suggesting that its benefits extend beyond structured math and coding benchmarks.
}
\label{tab:winogrande_results}
\begin{tabular}{lcccc}
\toprule
Model & Original & dParallel & ReDi & T3D (ours) \\
\midrule
Accuracy & 1.0 & 17.4 & 29.7 & \textbf{31.5} \\
\bottomrule
\end{tabular}
\end{table}

Overall, the WinoGrande results provide additional evidence that T3D is not only effective on structured reasoning and coding benchmarks, but can also improve few-step generation on broader open-ended language tasks.

\section{Ablation Study}
\label{app:ablation}
In this section, we present ablation studies for our proposed \textbf{\texttt{T3D}}.
In \appref{app:ablation-lambda}, we analyze the effect of the regularization coefficient $\lambda$.
In \appref{app:full-step-ablation}, we examine how different components of our method contribute to preserving the full diffusion decoding behavior.
Finally, in \appref{app:few-step-ablation}, we present ablations under few-step generation settings to evaluate the contribution of each component to the overall performance of our method.

\subsection{The Effectiveness of $\lambda$ in Training Objective}
\label{app:ablation-lambda}
\textbf{We conduct an ablation study on the regularization weight $\lambda$ in~\Eqref{eq:ddo-loss}}. We run these experiments using the
\textbf{SDAR-4B-Chat} model and evaluate it on \textbf{MATH500} benchmark.
\Tabref{tab:ablation_lambda} reports performance under different decoding
configurations with varying Tokens Per Step (TokPS), block sizes, and decoding
steps.

\textbf{Results.}\quad Overall, moderate regularization consistently yields the best or near-best
performance across most settings.
In particular, $\lambda=0.2$ achieves the strongest results in the majority of
configurations, especially under more aggressive few-step decoding regimes
(e.g., higher TokPS).
In contrast, a smaller regularization weight ($\lambda=0.05$) is often
insufficient to stabilize training, while overly strong regularization
($\lambda=0.5$) can lead to degraded performance in several settings. Based on these observations, we fix $\lambda=0.2$ for all experiments reported
in the main results.
\begin{table}[t]
\centering
\caption{Ablation study on the effect of the regularization weight $\lambda$ under different decoding configurations. We report the model performance across varying Tokens Per Step (TokPS), block sizes, and decoding steps. All experiments are done using MATH500 dataset. }
\label{tab:ablation_lambda}
\begin{tabular}{ccccccc}
\toprule
TokPS & Block Size & Decoding Steps & $\lambda=0.05$ & $\lambda=0.2$ & $\lambda=0.5$ \\
\midrule
1 & 4 & 4 & 67.80 & 69.00 & 69.20 \\
1 & 8 & 8 & 62.60 & 64.80 & 65.40 \\
2 & 8 & 4 & 57.20 & 58.60 & 56.20 \\
4 & 4 & 1 & 47.00 & 47.20 & 46.00 \\
4 & 8 & 2 & 40.20 & 45.20 & 42.00 \\
8 & 8 & 1 & 7.20  & 7.60  & 6.20  \\
\bottomrule
\end{tabular}
\end{table}

\subsection{Preserving Full-Step Diffusion Properties}
\label{app:full-step-ablation}

We first examine whether few-step distillation preserves the original full-step diffusion behavior.
After training each variant under the few-step distillation setting, we revert the model to the original
full-step diffusion decoding process without any additional fine-tuning. This evaluation tests whether
the learned model still retains the fine-grained denoising capability of the pretrained diffusion model.

\begin{table}[t]
\centering
\caption{
Ablation results under full-step diffusion decoding on MATH500.
All variants are trained for few-step distillation and then evaluated by reverting to the original
full-step diffusion process with block size $4$ and decoding steps $4$ per block.
}
\label{tab:appendix-full-step-ablation}
\resizebox{0.8\linewidth}{!}{
\setlength{\tabcolsep}{20pt}
\begin{tabular}{lcc}
\toprule
Method & Objective / Variant & Acc. \\
\midrule
Original & -- & 68.00 \\
SFT & Supervised Fine-Tuning & 60.20 \\
ReDi & Endpoint-Style Distillation & 50.40 \\
\midrule
TD & $\mathcal{L}_{\mathrm{traj}}$ & 22.00 \\
TD + Path Loss & $\mathcal{L}_{\mathrm{traj}} + \lambda \mathcal{L}_{\mathrm{path}}$ & 58.00 \\
\midrule
DDO & $\mathcal{L}_{\mathrm{traj\text{-}DDO}}$ & 12.00 \\
\rowcolor{gray!12}
T3D (Ours) & $\mathcal{L}_{\mathrm{traj\text{-}DDO}} + \lambda \mathcal{L}_{\mathrm{path}}$ & \textbf{69.00} \\
\bottomrule
\end{tabular}
}
\end{table}

\paragraph{Results.}
As shown in~\Tabref{tab:appendix-full-step-ablation}, directly applying few-step distillation can
substantially degrade full-step diffusion behavior. Both TD and DDO alone perform poorly when
the distilled model is reverted to the original full-step decoding process, indicating that optimizing
only for compressed decoding may damage the model's fine-grained denoising capability.

Adding the path-consistency loss substantially improves preservation under full-step decoding.
For TD, adding $\mathcal{L}_{\mathrm{path}}$ improves accuracy from $22.00$ to $58.00$, showing that
path-level supervision helps retain intermediate denoising behavior. The full T3D objective achieves
the best result, reaching $69.00$ accuracy and slightly surpassing the original model. These results
suggest that T3D improves few-step decoding while preserving the intrinsic diffusion behavior of
the pretrained model.

\subsection{Ablation Study on Few-Step Generation}
\label{app:few-step-ablation}

We further study how each component affects few-step generation performance. We evaluate
SDAR-4B-Chat on MATH500 with block size $8$ under two decoding budgets: $4$ decoding steps
per block and $2$ decoding steps per block. The latter corresponds to a more aggressive few-step
decoding regime.

\begin{table}[t]
\centering
\caption{
Component-wise ablation on few-step generation using SDAR-4B-Chat on MATH500.
We evaluate block size $8$ with two decoding budgets: $4$ and $2$ decoding steps per block.
Higher accuracy is better.
}
\label{tab:appendix-few-step-ablation}
\resizebox{0.8\linewidth}{!}{
\setlength{\tabcolsep}{8pt}
\begin{tabular}{lccc}
\toprule
Method & Objective / Variant & 
\begin{tabular}{c} BS $=8$ \\ DS $=4$ \end{tabular} &
\begin{tabular}{c} BS $=8$ \\ DS $=2$ \end{tabular} \\
\midrule
Original & -- & 49.60 & 16.80 \\
SFT & Supervised fine-tuning & 54.44 & 40.20 \\
ReDi & Endpoint-style distillation & 23.60 & 20.20 \\
\midrule
TD & $\mathcal{L}_{\mathrm{traj}}$ & 52.60 & 38.80 \\
TD + Path Loss & $\mathcal{L}_{\mathrm{traj}} + \lambda \mathcal{L}_{\mathrm{path}}$ & 49.40 & 37.20 \\
\midrule
DDO & $\mathcal{L}_{\mathrm{traj\text{-}DDO}}$ & 52.22 & 36.40 \\
\rowcolor{gray!12}
T3D (Ours) & Full objective & \textbf{60.60} & \textbf{45.00} \\
\bottomrule
\end{tabular}
}
\end{table}

\paragraph{Results.}
As shown in~\Tabref{tab:appendix-few-step-ablation}, trajectory-level distillation is the key factor
behind the improvement in few-step generation. TD improves over the original model under both
decoding budgets, especially in the more aggressive setting with only $2$ decoding steps per block,
where accuracy increases from $16.80$ to $38.80$. This supports our main claim that matching
teacher trajectories helps reduce the factorization error induced by large denoising jumps.

DDO further improves few-step performance by replacing the forward-KL trajectory objective with
a mode-seeking trajectory-matching objective. Under the aggressive setting with $2$ decoding steps per block, the full T3D objective performs best,
reaching $45.00$ accuracy. This suggests that the benefit of each component becomes more apparent
as the decoding budget becomes tighter.

Overall, these results show that trajectory supervision provides the main gain, DDO improves the
quality of trajectory matching, and path consistency further stabilizes generation under compressed
decoding.

\newpage

\end{document}